\documentclass[10pt,twocolumn,letterpaper]{article}

\usepackage{amsthm} 
\newtheorem{theorem}{Theorem}
\newtheorem{lemma}{Lemma}
\usepackage[pagenumbers]{cvpr} 

\usepackage{multirow} 
\definecolor{cvprblue}{rgb}{0.21,0.49,0.74}
\usepackage[pagebackref,breaklinks,colorlinks,allcolors=cvprblue]{hyperref}

\def\paperID{}
\def\confName{CVPR}
\def\confYear{2026}

\title{PFGNet: A Fully Convolutional Frequency-Guided Peripheral Gating Network for Efficient Spatiotemporal Predictive Learning}

\author{
    Xinyong Cai$^{1}$ \quad Changbin Sun$^{1}$ \quad Yong Wang$^{2}$ \quad Hongyu Yang$^{1}$ \quad Yuankai Wu$^{1}$ \thanks{Corresponding author} \\
    $^{1}$College of Computer Science, Sichuan University \\
    $^{2}$Department of Data and Systems Engineering, The University of Hong Kong \\
    {\tt\small fenghuajuedai09@gmail.com, wuyk0@scu.edu.cn}
}

\begin{document}
\maketitle
\begin{abstract}
Spatiotemporal predictive learning (STPL) aims to forecast future frames from past observations and is essential across a wide range of applications. Compared with recurrent or hybrid architectures, pure convolutional models offer superior efficiency and full parallelism, yet their fixed receptive fields limit their ability to adaptively capture spatially varying motion patterns. Inspired by biological center--surround organization and frequency-selective signal processing, we propose \textbf{PFGNet}, a fully convolutional framework that dynamically modulates receptive fields through pixel-wise frequency-guided gating. The core Peripheral Frequency Gating (PFG) block extracts localized spectral cues and adaptively fuses multi-scale large-kernel peripheral responses with learnable center suppression, effectively forming spatially adaptive band-pass filters. To maintain efficiency, all large kernels are decomposed into separable 1D convolutions ($1\times k$ followed by $k\times1$), reducing per-channel computational cost from $\mathcal{O}(k^2)$ to $\mathcal{O}(2k)$. PFGNet enables structure-aware spatiotemporal modeling without recurrence or attention. Experiments on Moving MNIST, TaxiBJ, Human3.6M, and KTH show that PFGNet delivers SOTA or near-SOTA forecasting performance with substantially fewer parameters and FLOPs. Our code is available at \href{https://github.com/fhjdqaq/PFGNet}{https://github.com/fhjdqaq/PFGNet}.
\end{abstract}    
\section{Introduction}
\label{sec:intro}

\begin{quote}
\textit{``We don’t just passively perceive the world; we actively generate it.''} --- Dr.~Anil~Seth
\end{quote}

This profound insight from cognitive science reveals that {perception is inherently predictive}. The human visual system does not passively register light; it \textit{anticipates} spatial structure to construct coherent representations~\cite{clark2015surfing}. This predictive process relies on a hierarchy of mechanisms that selectively amplify informative signals while suppressing redundancy, as exemplified by {center--surround suppression} in the retina and primary visual cortex~\cite{hubel1962receptive,kuffler1953discharge}. Mathematically, this antagonistic receptive field acts as a {spatial band-pass filter}, selectively responding to mid-frequency changes (edges, textures) while suppressing low-frequency (uniform) and high-frequency (noise) components~\cite{marr1980theory}. Such biological efficiency inspires a critical question: \textit{can we design computational operators that emulate this predictive, frequency-selective mechanism?}

\begin{figure}[t]
\centering
\includegraphics[width=0.95\linewidth]{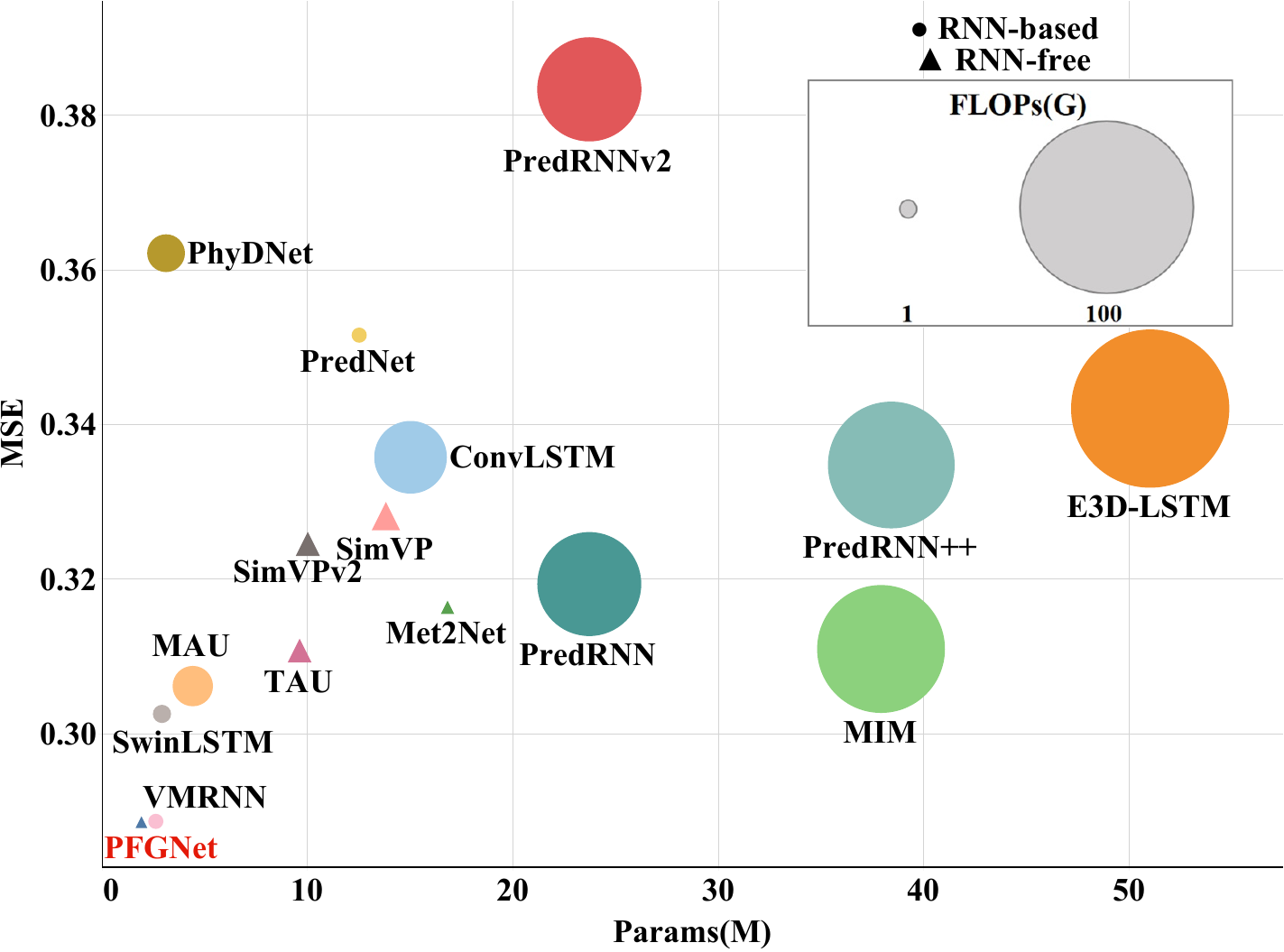}
\caption{Performance-efficiency trade-off on TaxiBJ. Bubble size denotes FLOPs. PFGNet achieves SOTA MSE with minimal cost.}
\label{fig:taxibj_bubble}
\vspace{-1.6em}
\end{figure}

Spatiotemporal predictive learning (STPL) mirrors this biological imperative: models must anticipate future frames from past observations in an unsupervised manner. STPL powers diverse applications---including weather nowcasting~\cite{shi2015convolutional,wang2017predrnn,li2025met2net}, autonomous driving~\cite{bhattacharyya2018long,lotter2017deep,Kwon_2019_CVPR}, traffic flow prediction~\cite{zhang2017deep,fang2019gstnet, cheng2024rethinking}, and human motion anticipation~\cite{wang2018rgb,babaeizadeh2017stochastic}---by learning latent spatial correlations and temporal dynamics from raw sequences. Among existing paradigms, {pure convolutional models} (e.g., SimVP~\cite{gao2022simvp}, TAU~\cite{tan2023temporal}, STLight~\cite{alfarano2025stlight}) offer superior parallelism and scalability over recurrent counterparts (e.g., ConvLSTM~\cite{shi2015convolutional}, PredRNN~\cite{wang2017predrnn}, SwinLSTM~\cite{tang2023swinlstm}, VMRNN~\cite{tang2024vmrnn}), yet remain constrained by {fixed, uniform receptive fields} that fail to adapt to spatially varying motion patterns.

Recent advances in {large-kernel ConvNets}~\cite{ding2022scaling,ding2024unireplknet,liu2022more} demonstrate that sufficiently wide receptive fields enable CNNs to approximate global context without depth, bridging local inductive bias with long-range dependency modeling. Recent studies~\cite{ding2024unireplknet, li2025shiftwiseconv, zhang2025scaling} further show that large convolutions can be reliably decomposed into cascaded small kernels without notable degradation in representational power, indicating that the benefits of large receptive fields need not rely on monolithic kernels. 

However, {simply enlarging kernels is insufficient}. Optimal receptive field size varies \textit{pixel-wise}: textured regions demand broad integration, while homogeneous areas require suppression of redundant low-frequency responses. This spatial heterogeneity echoes {center--surround organization}, suggesting that large kernels should be \textit{dynamically modulated} to mimic biological opponency. From this perspective, we make a key observation: {a large-kernel convolution with learnable center suppression mathematically recovers a ring-shaped (annular) band-pass filter~\cite{gonzalez2009digital}}. Consider a large kernel $H_L$ and a small central kernel $H_S$. Their difference $H_L - \beta H_S$ yields:
\begin{equation}
(H_L - \beta H_S) \ast x = \text{peripheral response} - \beta \cdot \text{central response}.
\end{equation}
This is similar to the {Difference of Gaussians (DoG)}---a classic model of center--surround receptive fields~\cite{marr1980theory}---and, in the frequency domain, implements a {ring-shaped band-pass filter} that amplifies mid-frequencies while attenuating DC (low) and high-frequency noise. Prior works~\cite{turner2018receptive,pan2023generalizing} show such filters dominate early visual processing; we now show they can be learned and adaptively gated in CNNs.

Building on this insight, we introduce {PFGNet}, a fully convolutional framework with a novel {Peripheral Frequency Gating (PFG)} block. PFG dynamically modulates multi-scale large-kernel responses using {pixel-wise frequency cues}---gradient magnitude, Laplacian, and local variance---extracted via lightweight depthwise filters. These cues drive a gating network that:
\begin{enumerate}
    \item {Suppresses central redundancy} via learnable $\beta$-scaled inhibition,
    \item {Fuses peripheral responses} across scales (e.g., $\mathcal{K}=\{9,15,31\}$),
    \item {Forms adaptive spatial band-pass filters} tailored to local texture.
\end{enumerate}
Combined with GLU-style~\cite{dauphin2017language} channel mixing and GRN normalization~\cite{woo2023convnext}, PFG enables {structure-aware, frequency-selective spatiotemporal modeling} without recurrence or attention. As shown in Figure~\ref{fig:taxibj_bubble}, PFGNet achieves {state-of-the-art forecasting accuracy} on TaxiBJ with {significantly fewer parameters and FLOPs} than recurrent, hybrid, or small-kernel baselines.

In summary, our contributions are threefold:  
\begin{enumerate}
    \item \textbf{Biological grounding}: We formalize large-kernel center suppression as a learnable ring-shaped band-pass filter, linking CNNs to center--surround receptive fields.
    \item \textbf{Frequency-guided adaptivity}: PFG uses local spectral cues to achieve pixel-wise receptive field modulation.
    \item \textbf{Scalable efficiency}: PFGNet delivers efficient, high-accuracy STPL via pure convolution and {fully separable large kernels}: \emph{every} $k \times k$ convolution is decomposed into a $1 \times k$ horizontal followed by a $k \times 1$ vertical convolution, reducing per-channel complexity from $\mathcal{O}(k^2)$ to $\mathcal{O}(2k)$.
\end{enumerate}

\section{Related Work}
\label{sec:rw}

\subsection{Spatiotemporal Predictive Learning}

\textbf{Recurrent-based models} leverage memory units to capture temporal dynamics sequentially. Starting from ConvLSTM~\cite{shi2015convolutional}, the PredRNN family~\cite{wang2017predrnn,wang2018predrnn++} introduces spatiotemporal memory flow and causal LSTMs to enhance long-term dependency modeling. Subsequent works further refine this paradigm: MIM~\cite{wang2019memory} mitigates non-stationarity via memory differencing; E3D-LSTM~\cite{wang2018eidetic} integrates 3D convolutions; PhyDNet~\cite{guen2020disentangling} disentangles physical constraints; MAU~\cite{chang2021mau} and CrevNet~\cite{yu2020efficient} improve motion encoding and invertibility. Recent hybrid designs inject global context into recurrent cells: SwinLSTM~\cite{tang2023swinlstm} replaces convolution with shifted-window self-attention; VMRNN~\cite{tang2024vmrnn} integrates Vision Mamba’s selective SSMs for efficient long-sequence modeling. Despite strong temporal expressiveness, these models suffer from {autoregressive inference}, {limited parallelism}, and {high latency}~\cite{tan2023openstl}.

\noindent\textbf{Recurrent-free convolutional models} eliminate recurrence for full parallelization and scalability. SimVP~\cite{gao2022simvp} proposes an encoder–translator–decoder pipeline, showing that a strong spatial backbone can implicitly model temporal evolution. Its variants replace the translator with MetaFormer architectures~\cite{yu2022metaformer,tan2023openstl}, suggesting that {backbone design dominates performance}. Recent works explore structural priors: MMVP~\cite{zhong2023mmvp} decouples motion and appearance; PastNet~\cite{wu2024pastnet} incorporates physical knowledge; TAT~\cite{nie2024triplet} and WaST~\cite{nie2024wavelet} use multi-dimensional attention and wavelet decomposition. Met2Net~\cite{li2025met2net} targets multivariate forecasting with latent prediction constraints. While efficient, these models rely on {fixed receptive fields}, failing to adapt to spatially varying motion patterns.

\subsection{Large-Kernel Convolutional Networks}

The resurgence of large-kernel ConvNets addresses the receptive field limitation of standard CNNs. ViT~\cite{dosovitskiy2021an} and Swin Transformer~\cite{liu2021swin} demonstrate that global context is key to performance, which small kernels struggle to capture efficiently. RepLKNet~\cite{ding2022scaling} scales kernels to $31\!\times\!31$ via re-parameterization; SLaK~\cite{liu2022more} pushes to $51\!\times\!51$ using sparse factorization; UniRepLKNet~\cite{ding2024unireplknet} unifies design principles: \textit{``see wide without going deep''}. These works prove that large kernels approximate global attention with {linear complexity} and {inductive bias}. However, they apply {uniform kernels}, ignoring pixel-wise variation in optimal receptive field size.

\subsection{Frequency-Domain and Peripheral Modeling}

Frequency-domain methods enable efficient global mixing by operating in transformed spaces. AFNO~\cite{guibas2021efficient} performs token mixing in Fourier space with learnable filtering; DCFormer~\cite{li2023discrete} models directly in the DCT domain with frequency-component selection; Octave Convolution~\cite{chen2019drop} partitions features into high- and low-frequency groups and computes at lower resolution for the low-frequency branch, enlarging the effective receptive field while reducing cost. {However, these operators require explicit transformations into the frequency domain, which introduce additional computational overhead and memory access costs.} In contrast, spatial-domain approaches avoid such transformations: PeLK~\cite{chen2024pelk} proposes \textit{peripheral convolution} with parameter sharing to scale kernels beyond $100\!\times\!100$; PerViT~\cite{min2022peripheral} introduces eccentricity-aware positional encoding to mimic center--surround attention bias.

Despite progress, existing works lack {pixel-wise frequency guidance} and {explicit center suppression}. Channel-level or band-level gating cannot adapt to local texture; uniform large kernels waste computation on homogeneous regions. No prior work unifies {biological center--surround}, {frequency-domain filtering}, and {adaptive large-kernel fusion} in a \textit{pure convolutional} STPL framework.
\section{Methodology}
\label{sec:method}

\begin{figure*}[t]
    \centering
    \includegraphics[scale=0.49]{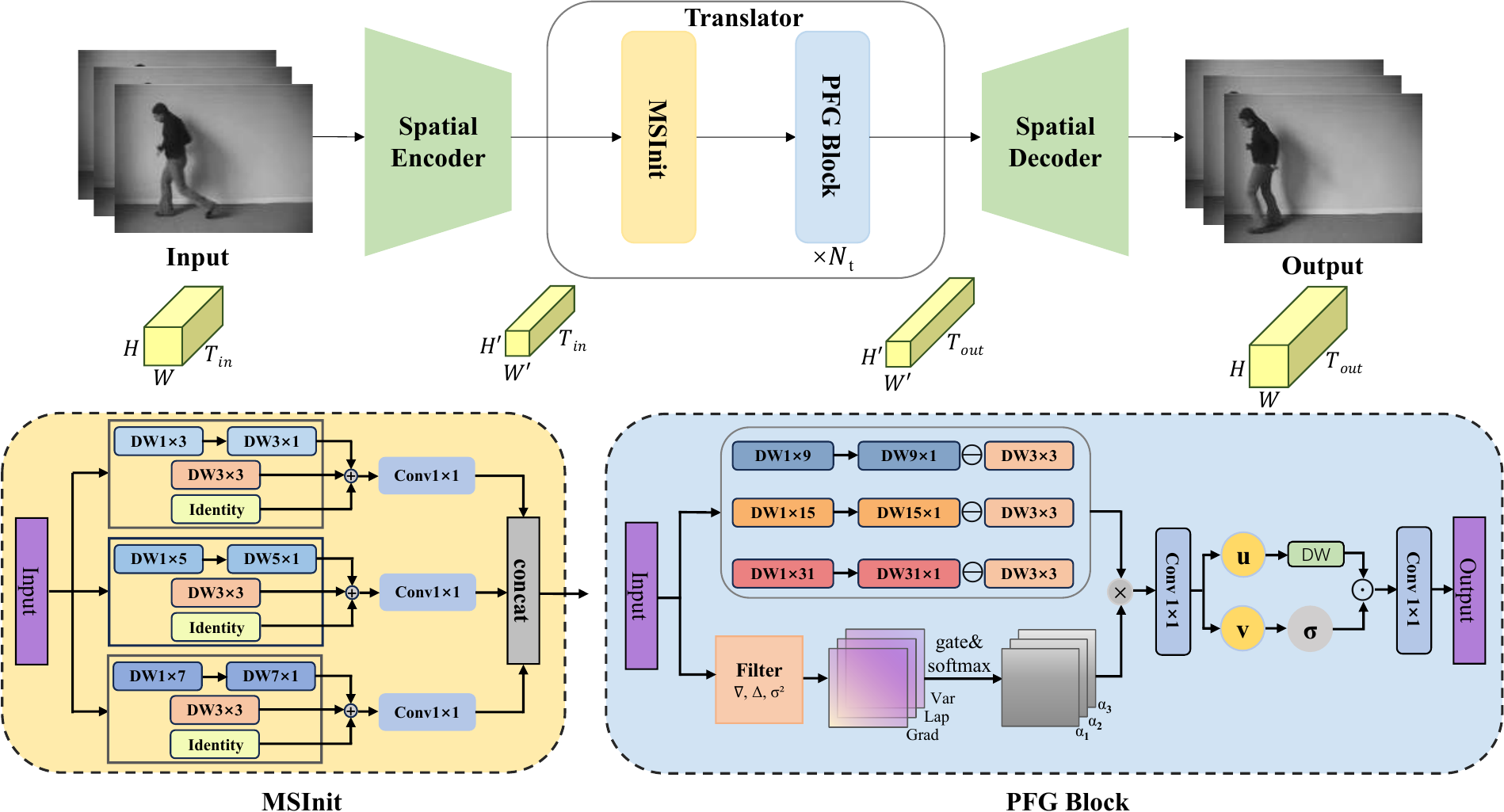}
    \caption{\textbf{Overall architecture and core modules of PFGNet.} The model follows a SimVP-style encoder--translator--decoder pipeline. The input sequence \(\{\mathbf{I}_t\}_{t=1}^{T_{\mathrm{in}}}\) is encoded into latent features, temporally packed, and processed by a MSInit followed by \(N_t\) PFG blocks.}
    \label{fig:architecture1}
\end{figure*}

\subsection{Overall Architecture}

PFGNet follows a SimVP-style encoder--translator--decoder~\cite{gao2022simvp} pipeline. We use \(N_s\) to denote the spatial encoder/decoder depth measured by the number of down- and up-sampling blocks, and \(N_t\) to denote the number of PFG blocks in the middle module.
Given input frames \(\{\mathbf{I}_t \in \mathbb{R}^{C_{\mathrm{in}}\times H \times W}\}_{t=1}^{T_{\mathrm{in}}}\), a shared spatial encoder extracts feature maps \(\mathbf{F}_t = \mathrm{Enc}(\mathbf{I}_t) \in \mathbb{R}^{C \times H' \times W'}\), which are then packed along the temporal dimension into \(\mathbf{Z} \in \mathbb{R}^{C' \times H' \times W'}\) with \(C'=T_{\mathrm{in}}\!\cdot\!C\). The middle stage starts with an MSInit (multi-scale initialization) module that generates coarse low-, mid-, and high-frequency components via identity-preserving mid- and small-kernel paths.
This is followed by \(N_t\) stacked PFG blocks, each dynamically selecting among multi-scale large-kernel peripheral responses using pixel-wise frequency-guided gating (Sobel gradient, Laplacian, local variance) with center suppression. The output \(\mathbf{Z}'\) is unpacked temporally to recover
\(\{\mathbf{F}'_t\}_{t=1}^{T_{\mathrm{out}}}\), which are then fed into a symmetric decoder to reconstruct the output frames \(\{\mathbf{O}_t\}_{t=1}^{T_{\mathrm{out}}}\). See Figure~\ref{fig:architecture1} for details.

\subsection{Lightweight Multi-Scale Initialization}
\label{sec:msinit}
The PFG block selects at each pixel the most relevant response among several scales through frequency-guided gating. For this content-adaptive fusion to work well, the model must first access diverse multi-scale responses covering low, mid, and high frequencies. Without such structured initialization, PFG would either work on uniform small-kernel features with limited long-range context or trigger costly full-scale processing, both of which are suboptimal.

To build this foundation efficiently, we introduce \textbf{MSInit}. It performs an initial condensation of features and provides stable multi-scale responses using separable one-dimensional kernels that approximate a \(k_m \times k_m\) response. For each scale \(m \in \{1,\dots,M\}\):
\begin{equation}
T_m(\mathbf{Z}) = \mathbf{v}_m \ast (\mathbf{h}_m \ast \mathbf{Z}) + d_m(\mathbf{Z}) + \mathbf{Z},
\label{eq:msinit_branch}
\end{equation}
where \(\mathbf{h}_m \in \mathbb{R}^{1 \times k_m}\) and \(\mathbf{v}_m \in \mathbb{R}^{k_m \times 1}\) are one-dimensional kernels, \(d_m(\cdot)\) denotes a \(3\times3\) depthwise convolution branch enhancing mid-frequency sensitivity, and the identity term \(\mathbf{Z}\) maintains gradient flow. Outputs are projected by \(1\times1\) convolutions with channels evenly divided across branches, and then concatenated to form the multi-scale tensor:
\begin{equation}
\mathbf{X} = \mathrm{Concat}\big(\{\mathrm{Conv}_{1\times1}(T_m(\mathbf{Z}))\}_{m=1}^{M}\big) \in \mathbb{R}^{C' \times H' \times W'}.
\label{eq:msinit_out}
\end{equation}
In practice, we use \(k_m \in \{3,5,7\}\), yielding strong performance and reliable multi-scale context for subsequent frequency-guided gating.

\subsection{Pixel-wise Frequency-Guided Peripheral Gating}
\label{sec:pfg}

The core of PFGNet is the \textbf{PFG block} (Figure~\ref{fig:pfg_block}), which dynamically fuses multi-scale large-kernel peripheral responses using pixel-wise frequency cues.

\begin{figure}[t]
\centering
\includegraphics[width=1.03\linewidth]{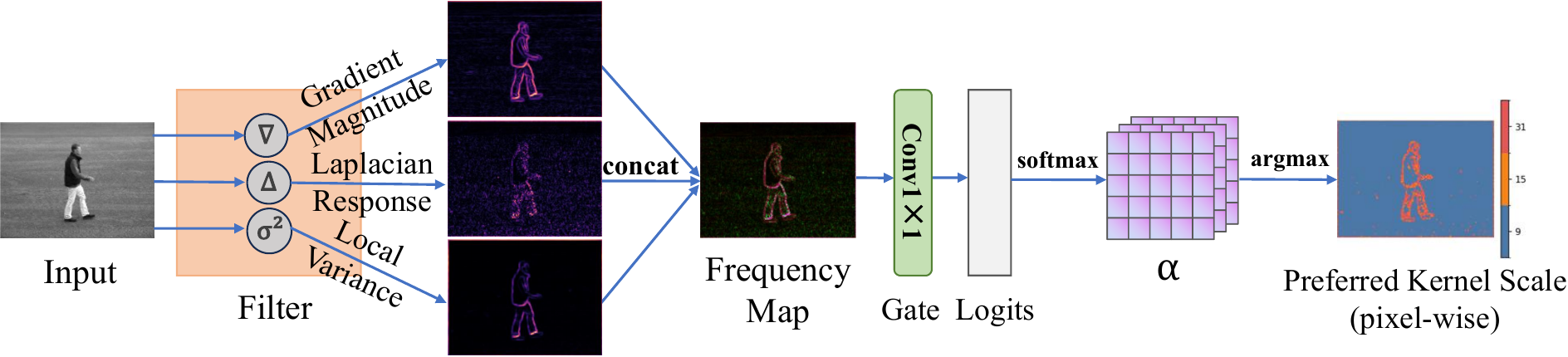}
\caption{
\textbf{Frequency-Guided Peripheral Gating in the PFG block.} Three local spectral cues (gradient magnitude, Laplacian, local variance) are extracted via fixed depthwise filters, channel-averaged, concatenated into a 3-channel frequency map, and passed through a \(1\!\times\!1\) conv to produce per-pixel gate logits. Softmax over the \(K\) scales yields selection weights \(\boldsymbol{\alpha}_k\) (the argmax visualization is only used to illustrate the preferred scale).}
\label{fig:pfg_block}
\end{figure}

From input \(\mathbf{X} \in \mathbb{R}^{C' \times H' \times W'}\), we compute three spectral cues (frequency descriptor) via fixed depthwise convolution followed by channel averaging:
\begin{equation}
\begin{aligned}
\mathbf{f}_1 &= \sqrt{(\mathbf{G}_x \ast \mathbf{X})^2 + (\mathbf{G}_y \ast \mathbf{X})^2}, \\
\mathbf{f}_2 &= |\mathbf{L} \ast \mathbf{X}|, \\
\mathbf{f}_3 &= \mathbb{E}_{3\!\times\!3}[\mathbf{X}^2] - \mathbb{E}_{3\!\times\!3}[\mathbf{X}]^2,
\end{aligned}
\label{eq:spectral_cues}
\end{equation}
where \(\mathbf{G}_x\) and \(\mathbf{G}_y = \mathbf{G}_x^\top\) are Sobel edge detectors, \(\mathbf{L}\) is the Laplacian curvature detector, and \(\mathbb{E}_{3\!\times\!3}[\cdot]\) denotes local averaging via a \(3\!\times\!3\) avg-pool (padding=1, stride=1) to estimate variance.
The resulting single-channel maps are concatenated to form the frequency descriptor \(\mathbf{F} \in \mathbb{R}^{3 \times H' \times W'}\).

A lightweight \(1\!\times\!1\) convolution with learnable weights \(\mathbf{W}_g\) and bias \(\mathbf{b}_g\) maps \(\mathbf{F}\) to per-pixel, per-scale logits:
\begin{equation}
    \mathbf{Z}_g = \mathbf{W}_g \ast \mathbf{F} + \mathbf{b}_g \in \mathbb{R}^{K \times H' \times W'},
\end{equation}
where \(K = |\mathcal{K}|\) is the number of peripheral scales.
A softmax over the scale dimension per pixel produces gating weights:
\begin{equation}
\alpha_k(h,w) = \frac{\exp(Z_{g,k}(h,w))}{\sum_{j=1}^{K} \exp(Z_{g,j}(h,w))}, \quad k \in \mathcal{K}.
\end{equation}
This ensures \(\sum_k \alpha_k(h,w) = 1\) at every pixel, enabling adaptive selection among scales.
Here, \(\boldsymbol{\alpha}_k \in \mathbb{R}^{H' \times W'}\) denotes the per-pixel gating map for scale \(k\).

\subsubsection{Multi-Scale Peripheral Response with Center Suppression}
For each scale \(k \in \mathcal{K} = \{9,15,31\}\), the large kernel is approximated separably to reduce computation:
\begin{equation}
\mathbf{P}_k = \mathbf{v}_k \ast (\mathbf{h}_k \ast \mathbf{X}) \in \mathbb{R}^{C' \times H' \times W'},
\end{equation}
where \(\mathbf{h}_k\) and \(\mathbf{v}_k\) are \(1\!\times\!k\) and \(k\!\times\!1\) depthwise convolution kernels, respectively.
A \(3\!\times\!3\) depthwise center kernel \(\mathbf{C}\) extracts local context.
The peripheral response with learnable center suppression is:
\begin{equation}
\mathbf{Y}_k = \mathbf{P}_k - \tanh(\boldsymbol{\beta}_k) \odot (\mathbf{C} \ast \mathbf{X}),
\end{equation}
where \(\boldsymbol{\beta}_k \in \mathbb{R}^{C'}\) is a learnable channel-wise parameter for scale \(k\) (broadcast over spatial dimensions). Each entry of \(\tanh(\boldsymbol{\beta}_k)\) lies in \([-1,1]\), allowing \emph{bidirectional modulation}. This sign-sensitive flexibility is crucial since feature maps contain both positive and negative activations, enabling the model to adaptively enhance or inhibit central context based on local spectral content.

The final output is a convex combination of the gated peripheral responses:
\begin{equation}
\mathrm{PFG}(\mathbf{X}) = \sum_{k \in \mathcal{K}} \boldsymbol{\alpha}_k \odot \mathbf{Y}_k.
\end{equation}
This forms adaptive spatial band-pass filters that emphasize mid-frequency motion patterns while suppressing redundant low-frequency background and high-frequency noise.

This formulation can be interpreted as a learnable center–surround operation: a large peripheral convolution captures broad contextual patterns, while a small central kernel suppresses redundant local responses. From a frequency-domain perspective, the kernel size determines the spectral selectivity:  a large spatial kernel corresponds to a \emph{narrow, low-pass} response that preserves slowly varying background information, whereas a small kernel produces a \emph{broader, higher-pass} response that is more sensitive to local edges and noise. Subtracting the two thus removes overlapping low-frequency components while retaining their difference in the middle band.

Formally, let \(H_{L_k}(\omega)\) and \(H_S(\omega)\) denote the frequency responses of the large and small kernels, both real-valued.  
Define
\(
H_k(\omega) = H_{L_k}(\omega) - \beta_k H_S(\omega).
\)
If there exist \(0 < r_1 < r_2 \le \pi\) such that
\(
H_{L_k}(\omega) > \beta_k H_S(\omega), \quad r_1 < \|\omega\| < r_2,
\)
and
\(
H_{L_k}(\omega) \le \beta_k H_S(\omega), \quad \text{otherwise},
\)
then \(H_k(\omega)\) behaves as a \textbf{ring-shaped band-pass filter} that amplifies mid-frequency components:
\begin{equation}
    H_k(\omega) > 0 \text{ for } \omega \in \Omega_m, \quad 
H_k(\omega) \le 0 \text{ otherwise,}
\end{equation}
where \(\Omega_m = \{\omega : r_1 < \|\omega\| < r_2\}\).  
This property explains how learnable center suppression enables large-kernel convolutions to selectively enhance motion- or edge-relevant frequencies while filtering out static low-frequency backgrounds and high-frequency noise. A more detailed theoretical analysis is provided in the supplementary material, where we show that, under mild conditions, \textbf{there exists a coefficient $\beta$ such that the composite filter $H_k(\omega)=H_{L_k}(\omega)-\beta_k H_S(\omega)$ achieves a higher signal-to-noise ratio than $H_{L_k}(\omega)$ alone.}

\subsubsection{Channel Mixing via GLU}
Given the preceding feature, we expand channels from \(C'\) to \(2E\) by a point-wise convolution and evenly split along channels into $\mathbf{U}$ and $\mathbf{V}$. A lightweight GLU~\cite{dauphin2017language} path then complements spatial adaptation:
\begin{equation}
\mathbf{Z}_c = \mathrm{PW}_{E \to C'}\big(\sigma(\mathbf{U}) \odot \mathrm{DW}_{3\!\times\!3}(\mathbf{V})\big),
\end{equation}
where \(E=4C'\) with expansion ratio \(r=4\); \(\mathrm{PW}\) denotes a point-wise \(1{\times}1\) convolution, and \(\mathrm{DW}_{3\times3}\) denotes a depthwise \(3{\times}3\) convolution. We also apply GRN normalization~\cite{woo2023convnext} and LayerScale~\cite{touvron2021going} for stable training.

\subsection{Output Decoding}
After \(N_t\) PFG blocks, the final latent tensor \(\mathbf{Z}' \in \mathbb{R}^{C' \times H' \times W'}\) is unpacked along the channel dimension to recover the temporal axis, yielding \(\{\mathbf{F}'_t \in \mathbb{R}^{C \times H' \times W'}\}_{t=1}^{T_{\mathrm{out}}}\). A symmetric decoder, following the SimVP design~\cite{gao2022simvp}, applies convolutional upsampling to restore each \(\mathbf{F}'_t\) to the original resolution, yielding output frames \(\mathbf{O}_t = \mathrm{Dec}(\mathbf{F}'_t)\). Collectively, these frames form the output sequence
\(\{\mathbf{O}_t \in \mathbb{R}^{C_{\mathrm{out}} \times H \times W}\}_{t=1}^{T_{\mathrm{out}}}\). The default setting is \(T_{\mathrm{out}} = T_{\mathrm{in}}\). In practice, if \(T_{\mathrm{out}} < T_{\mathrm{in}}\), the first \(T_{\mathrm{out}}\) frames are sliced; if \(T_{\mathrm{out}} > T_{\mathrm{in}}\), predictions are rolled out autoregressively by feeding the last output back as input.

\subsection{Computational Efficiency}
\label{sec:complexity}

Despite using large receptive fields (\(\mathcal{K} = \{9,15,31\}\)), PFGNet maintains high efficiency by \emph{decomposing every large 2D kernel into two 1D convolutions} (horizontal \(1 \times k\) followed by vertical \(k \times 1\)). This separable design reduces the per-channel complexity from \(\mathcal{O}(k^2)\) to \(\mathcal{O}(2k)\)—nearly linear in kernel size. For \(k=31\), this yields a 15× reduction in kernel parameters and multiply-adds per layer. 


\section{Experiments}
\label{sec:exp}
We evaluate \textbf{PFGNet} on four representative spatiotemporal prediction benchmarks: Moving MNIST~\cite{srivastava2015unsupervised}, TaxiBJ~\cite{zhang2017deep}, KTH~\cite{schuldt2004recognizing}, and Human3.6M~\cite{ionescu2013human3}. All datasets are obtained from the OpenSTL~\cite{tan2023openstl} repository and processed following its public pipeline. To ensure reproducibility and fair comparison, we align with the OpenSTL evaluation framework,  and we include influential recurrent and non-recurrent methods to establish a comprehensive benchmark.

\subsection{Experimental Setups}

\paragraph{Datasets}
\textbf{1. Moving MNIST}: The standard setting uses the first 10 frames for observation and predicts the next 10 frames. Both the training and test sets contain 10{,}000 sequences, serving to assess the stability and lower bound of temporal modeling. \textbf{2. TaxiBJ}:  The prevalent protocol adopts a temporal split with the last four weeks for testing and earlier data for training. All models observe 4 frames and predict 4 frames, highlighting the trade-off across numerical error, efficiency, and deployability.
\textbf{3. KTH}: The common split uses subjects 1 to 16 for training and 17 to 25 for testing. All models observe 10 frames, and predict 20 frames or 40 frames, focusing on long-range structural preservation and boundary consistency.  \textbf{4. Human3.6M}: We follow the OpenSTL public configuration and evaluation paradigm, with resolution set to $256\times256\times3$, and organize the training and testing pipeline accordingly to ensure consistency with community practice.

\paragraph{Metrics}
We adopt a complementary set of quality and efficiency metrics. {MSE} measures per-pixel mean squared error and lower values are better. {MAE} measures per-pixel mean absolute error and lower values are better. {SSIM} assesses image structural similarity in terms of structure, luminance, and contrast and higher values are better. {PSNR} measures reconstruction quality in a signal-to-noise ratio form and higher values are better. {Parameters} and {FLOPs} characterize model complexity and computational cost and smaller values are more deployment-friendly. 

\begin{table*}[htbp]
\centering
\caption{Experimental setup. \(T_{\mathrm{in}}\) and \(T_{\mathrm{out}}\) denote the numbers of input and predicted frames, respectively.}
\label{tab:setup}
{\setlength{\tabcolsep}{4.6pt} 
\begin{tabular}{c c c c c c c c c c c}
\toprule
Dataset & Resolution & \(T_{\mathrm{in}}\) & \(T_{\mathrm{out}}\) & $N_s$ & $N_t$ & DropPath & Epochs & Learning Rate & Batch Size & GPU \\
\midrule
Moving MNIST & $(64,64,1)$   & 10 & 10    & 4 & 8 & 0 & 2000 & $1\mathrm{e}{-3}$         & 16 & $1\times$ RTX 4090 \\
TaxiBJ       & $(32,32,2)$   & 4  & 4     & 2 & 8 & 0.1 & 50  & $2\mathrm{e}{-3}$          & 16 & $1\times$ RTX 3090 \\
KTH          & $(128,128,1)$ & 10 & 20/40 & 2 & 6 & 0.1 & 100  & $2\mathrm{e}{-4}/1\mathrm{e}{-4}$ & 2/1 & $4\times$ RTX 4090 \\
Human3.6M    & $(256,256,3)$ & 4  & 4     & 4 & 8 & 0.1 & 50   & $1.5\mathrm{e}{-3}$        & 2  & $5\times$ RTX 3090 \\
\bottomrule
\end{tabular}}
\end{table*}

\paragraph{Training Details}
To ensure fairness, we follow the OpenSTL training and evaluation procedures, including unified data preprocessing, evaluation protocols, and logging. Dataset-specific settings for spatial resolution, temporal windows, training epochs, learning rates, batch sizes, architectural hyperparameters, and hardware resources are summarized in Table~\ref{tab:setup}. All implementations are conducted and verified within a unified codebase.

\subsection{Main Results}

Tables~\ref{tab:mmnist}, \ref{tab:taxibj}, \ref{tab:kth}, and \ref{tab:human_dataset_comparison} report comprehensive quantitative comparisons between PFGNet and state-of-the-art baselines across four benchmarks. 
{Figures~\ref{fig:mmnist_qual}, \ref{fig:taxibj_qual}, \ref{fig:kth_qual} and \ref{fig:human_qual} showcase representative qualitative predictions from PFGNet.} 

\paragraph{Moving MNIST}
PFGNet achieves lower numerical error and higher structural similarity under the non-recurrent paradigm, showing stable gains over SimVP and TAU while still outperforming recurrent methods that emphasize temporal memory. The results indicate that frequency-guided gating highlights the most predictive mid-band components in short-term trajectory and occlusion scenarios, which reduces blur diffusion and ghosting while preserving clear digit boundaries and motion continuity.

\begin{table}[htbp]
\centering
\caption{Quantitative comparison on the Moving MNIST dataset.}
\begin{tabular}{c c c}
\toprule
Method & MSE $\downarrow$ & SSIM $\uparrow$ \\
\midrule
\multicolumn{3}{c}{Recurrent-based Methods} \\
\midrule
ConvLSTM~\cite{shi2015convolutional}   & 103.3 & 0.707 \\
DFN~\cite{jia2016dynamic}        & 89.0  & 0.726 \\
FRNN~\cite{oliu2018folded}       & 69.7  & 0.813 \\
VPN~\cite{kalchbrenner2017video}        & 64.1  & 0.870 \\
PredRNN~\cite{wang2017predrnn}    & 56.8  & 0.867 \\
CausalLSTM~\cite{wang2018predrnn++} & 46.5  & 0.898 \\
MIM~\cite{wang2019memory}        & 44.2  & 0.910 \\
E3D-LSTM~\cite{wang2018eidetic}   & 41.3  & 0.910 \\
LMC~\cite{lee2021video}        & 41.5  & 0.924 \\
MAU~\cite{chang2021mau}        & 27.6  & 0.937 \\
PhyDNet~\cite{guen2020disentangling}    & 24.4  & 0.947 \\
CrevNet~\cite{yu2020efficient}    & 22.3  & 0.949 \\
SwinLSTM~\cite{tang2023swinlstm}   & 17.7  & 0.962 \\
VMRNN~\cite{tang2024vmrnn}      & \underline{16.5}  & \underline{0.965} \\
\midrule
\multicolumn{3}{c}{Recurrent-free Methods} \\
\midrule
SimVP~\cite{gao2022simvp}      & 23.8  & 0.948 \\
MMVP~\cite{zhong2023mmvp}       & 22.2  & 0.952 \\
TAU~\cite{tan2023temporal}        & 19.8  & 0.957 \\
PFGNet(Ours) & \textbf{15.2} & \textbf{0.967} \\
\bottomrule
\end{tabular}
\label{tab:mmnist}
\end{table}

\begin{figure}[htbp]
\centering
\includegraphics[width=\linewidth]{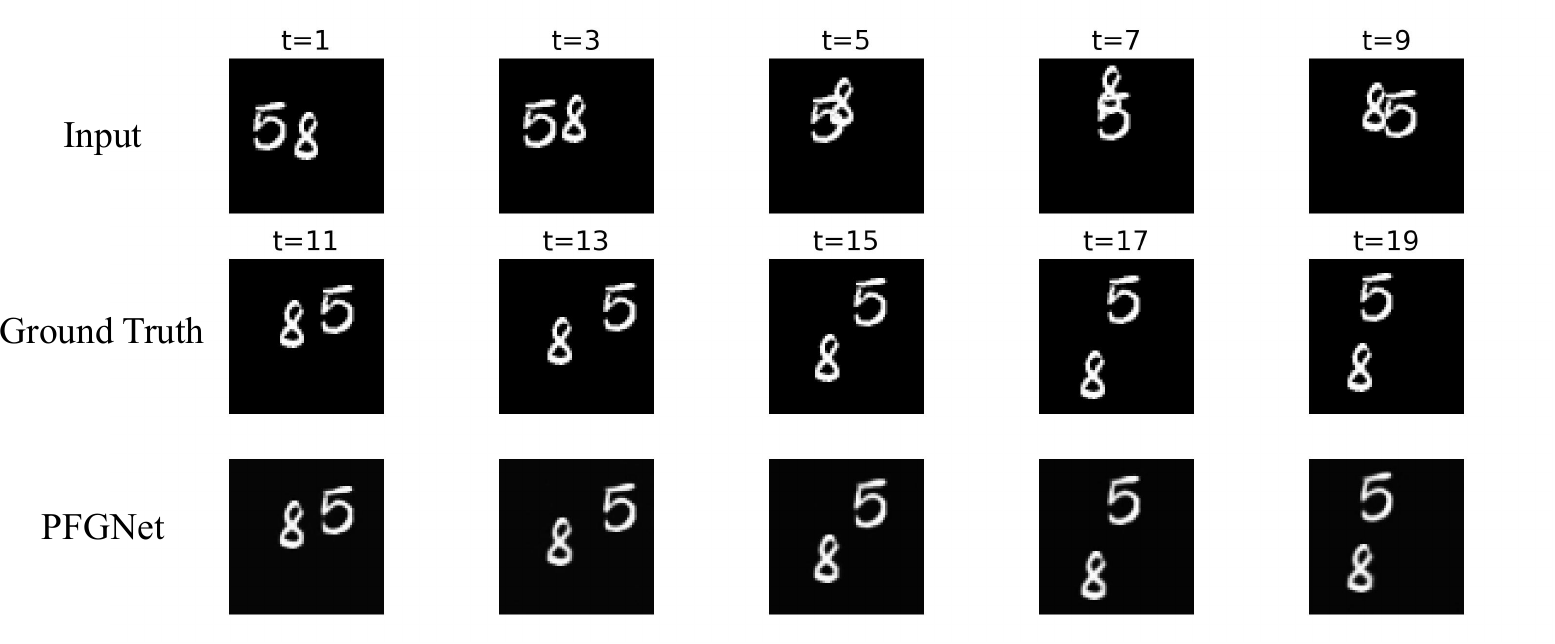}
\caption{
Qualitative results of PFGNet on Moving MNIST. 
}
\label{fig:mmnist_qual}
\vspace{-3mm}
\end{figure}

\paragraph{TaxiBJ}
On the TaxiBJ dataset (Table~\ref{tab:taxibj}), PFGNet achieves {leading MSE (0.2881)}, {near-best MAE (14.75)} and {SSIM (0.9857)} with only {1.9M parameters} and {0.6G FLOPs}—surpassing all recurrent-free baselines and recurrent methods. This demonstrates PFGNet's ability to stably extract {predictive structures}—robust, generalizable representations of inflow–outflow dynamics that persist across coupled spatiotemporal patterns. Compared with state-of-the-art recurrent models, PFGNet delivers {clear efficiency advantages} while maintaining {equal or superior accuracy}, making it ideal for compute-constrained and latency-sensitive deployments such as real-time urban traffic forecasting.

\begin{table}[htbp]
\centering
\setlength{\tabcolsep}{3.2pt}
\renewcommand{\arraystretch}{1.05}
\caption{Quantitative comparison on the TaxiBJ dataset.}
\resizebox{1.00\linewidth}{!}{%
\begin{tabular}{cccccc}
\toprule
Method & Params & FLOPs &
MSE $\downarrow$ & MAE $\downarrow$ & SSIM $\uparrow$ \\
\midrule
\multicolumn{6}{c}{Recurrent-based Methods} \\
\midrule
ConvLSTM~\cite{shi2015convolutional}   & 15.0M & 20.7G  & 0.3358 & 15.32 & 0.9836 \\
PredNet~\cite{lotter2017deep}          & 12.5M & 0.9G   & 0.3516 & 15.91 & 0.9828 \\
PredRNN~\cite{wang2017predrnn}         & 23.7M & 42.4G  & 0.3194 & 15.31 & 0.9838 \\
PredRNN++~\cite{wang2018predrnn++}     & 38.4M & 63.0G  & 0.3348 & 15.37 & 0.9834 \\
E3D-LSTM~\cite{wang2018eidetic}        & 51.0M & 98.2G  & 0.3421 & 14.98 & 0.9842 \\
PhyDNet~\cite{guen2020disentangling}   & 3.1M  & 5.6G   & 0.3622 & 15.53 & 0.9828 \\
MIM~\cite{wang2019memory}              & 37.9M & 64.1G  & 0.3110 & 14.96 & 0.9847 \\
MAU~\cite{chang2021mau}                & 4.4M  & 6.4G   & 0.3062 & 15.26 & 0.9840 \\
PredRNNv2~\cite{wang2022predrnn}       & 23.7M & 42.6G  & 0.3834 & 15.55 & 0.9826 \\
SwinLSTM~\cite{tang2023swinlstm}       & 2.9M  & 1.3G   & 0.3026 & 15.00 & 0.9843 \\
VMRNN~\cite{tang2024vmrnn}             & \underline{2.6M}  & \underline{0.9G}   & 0.2887 & \textbf{14.69} & \textbf{0.9858} \\
\midrule
\multicolumn{6}{c}{Recurrent-free Methods} \\
\midrule
SimVP~\cite{gao2022simvp}              & 13.8M & 3.6G   & 0.3282 & 15.45 & 0.9835 \\
TAU~\cite{tan2023temporal}             & 9.6M  & 2.5G   & 0.3108 & 14.93 & 0.9848 \\
SimVPv2~\cite{tan2211simvp}            & 10.0M & 2.6G   & 0.3246 & 15.03 & 0.9844 \\
Met2Net~\cite{li2025met2net}           & 16.8M & 8.8G   & 0.3164 & 14.82 & 0.9851 \\
PFGNet(Ours)                             & \textbf{1.9M}  & \textbf{0.6G} & \textbf{0.2881} & \underline{14.75} & \underline{0.9857} \\
\bottomrule
\end{tabular}
}
\label{tab:taxibj}
\end{table}

\begin{figure}[htbp]
\centering
\includegraphics[width=\linewidth]{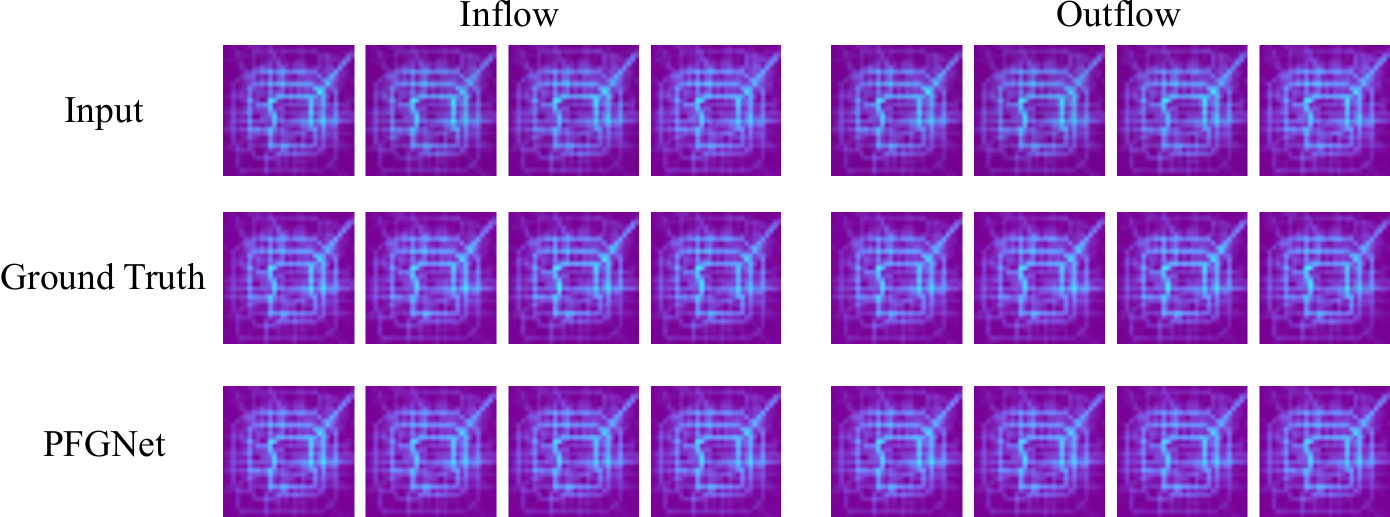}
\caption{
Qualitative results of PFGNet on TaxiBJ. 
}
\label{fig:taxibj_qual}
\vspace{-3mm}
\end{figure}

\paragraph{KTH}
On the KTH dataset (Table~\ref{tab:kth}), PFGNet achieves the {highest SSIM} across all methods, with competitive PSNR (34.10 and 32.64). This SSIM dominance reflects PFGNet’s superior preservation of limb contours, joint trajectories, and motion topology over long horizons—crucial for articulated action forecasting. The {slightly lower PSNR} arises because KTH features high-contrast, near-static backgrounds that dominate pixel-wise error. PSNR penalizes minor intensity shifts in large uniform regions, whereas PFGNet intentionally de-emphasizes photometric fidelity to prevent limb collapse and boundary diffusion—yielding anatomically coherent, perceptually superior predictions.

\begin{table}[htbp]
\centering
\setlength{\tabcolsep}{3.5pt}
\renewcommand{\arraystretch}{1.05}
\caption{Quantitative comparison on the KTH dataset.}
\begin{tabular}{ccccc} 
\toprule
\multirow{2}{*}{Method} &
\multicolumn{2}{c}{KTH (10 $\rightarrow$ 20)} &
\multicolumn{2}{c}{KTH (10 $\rightarrow$ 40)} \\
\cmidrule(lr){2-3} \cmidrule(lr){4-5}
& SSIM $\uparrow$ & PSNR $\uparrow$ & SSIM $\uparrow$ & PSNR $\uparrow$ \\
\midrule
\multicolumn{5}{c}{Recurrent-based Methods} \\
\midrule
ConvLSTM~\cite{shi2015convolutional}   & 0.712 & 23.58 & 0.639 & 22.85 \\
SAVP~\cite{lee2018stochastic}          & 0.746 & 25.38 & 0.701 & 23.97 \\
FRNN~\cite{oliu2018folded}             & 0.771 & 26.12 & 0.678 & 23.77 \\
DFN~\cite{jia2016dynamic}              & 0.794 & 27.26 & 0.652 & 23.01 \\
PredRNN~\cite{wang2017predrnn}         & 0.839 & 27.55 & 0.703 & 24.16 \\
VarNet~\cite{jin2018varnet}            & 0.843 & 28.48 & 0.739 & 25.37 \\
SAVP-VAE~\cite{lee2018stochastic}      & 0.852 & 27.77 & 0.811 & 26.18 \\
PredRNN++~\cite{wang2018predrnn++}     & 0.865 & 28.47 & 0.741 & 25.21 \\
E3D-LSTM~\cite{wang2018eidetic}        & 0.879 & 29.31 & 0.810 & 27.24 \\
STMFANet~\cite{jin2020exploring}       & 0.893 & 29.85 & 0.851 & 27.56 \\
SwinLSTM~\cite{tang2023swinlstm}       & 0.903 & \textbf{34.34} & 0.879 & \textbf{33.15} \\
VMRNN~\cite{tang2024vmrnn}             & \underline{0.907} & 34.06 & 0.882 & 32.69 \\
\midrule
\multicolumn{5}{c}{Recurrent-free Methods} \\
\midrule
SimVP~\cite{gao2022simvp}              & 0.905 & 33.72 & 0.886 & \underline{32.93} \\
MMVP~\cite{zhong2023mmvp}              & 0.906 & 27.54 & \underline{0.888} & 26.35 \\
PFGNet(Ours)                              & \textbf{0.911} & \underline{34.10} & \textbf{0.891} & 32.64 \\
\bottomrule
\end{tabular}
\label{tab:kth}
\end{table}

\begin{figure}[htbp]
\centering
\includegraphics[width=\linewidth]{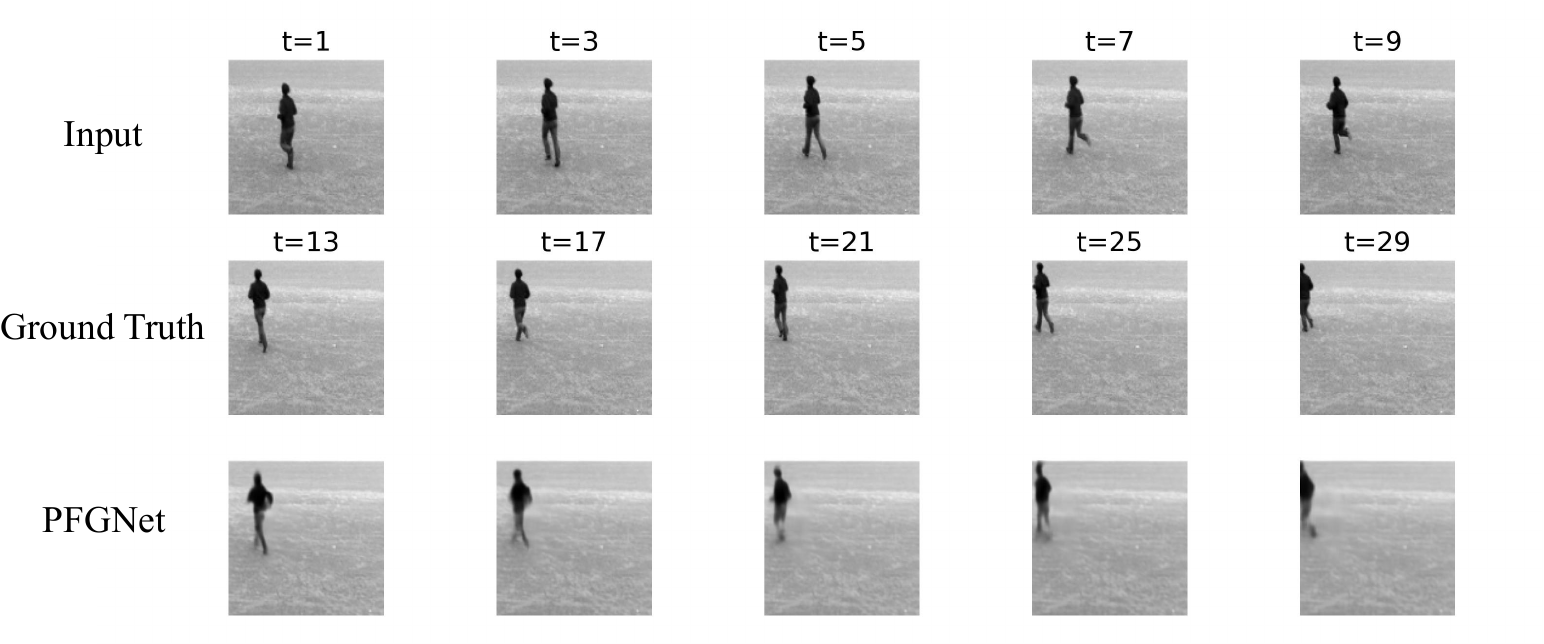}
\caption{
Qualitative results of PFGNet on KTH. 
}
\label{fig:kth_qual}
\vspace{-3mm}
\end{figure}

\begin{table}[htbp]
\footnotesize
\centering
\setlength{\tabcolsep}{3.5pt}
\caption{Quantitative comparison on the Human3.6M dataset.}
\begin{tabular}{c c c c c c}
\toprule
Method & Params & FLOPs & MSE $\downarrow$ & MAE $\downarrow$ & SSIM $\uparrow$ \\
\midrule
\multicolumn{6}{c}{Recurrent-based Methods} \\
\midrule
ConvLSTM~\cite{shi2015convolutional}& 15.5M & 347.0G & 125.5 & 1566.7 & 0.9813 \\
E3D-LSTM~\cite{wang2018eidetic} & 60.9M & 542.0G & 143.3 & 1442.5 & 0.9803 \\
PredNet~\cite{lotter2017deep} & 12.5M & \textbf{13.7G}  & 261.9 & 1625.3 & 0.9786 \\
PhyDNet~\cite{guen2020disentangling} & \underline{4.2M}  & \underline{19.1G}  & 125.7 & 1614.7 & 0.9804 \\
MAU~\cite{chang2021mau} & 20.2M & 105.0G & 127.3 & 1577.0 & 0.9812 \\
MIM~\cite{wang2019memory} & 47.6M & 1051.0G & 112.1 & 1467.1 & 0.9829 \\
PredRNN~\cite{wang2017predrnn}  & 24.6M & 704.0G & 113.2 & 1458.3 & 0.9831 \\
PredRNN++~\cite{wang2018predrnn++} & 39.3M & 1033.0G & 110.0 & 1452.2 & 0.9832 \\
PredRNNv2~\cite{wang2022predrnn} & 24.6M & 708.0G & 114.9 & 1484.7 & 0.9827 \\
\midrule
\multicolumn{6}{c}{Recurrent-free Methods} \\
\midrule
SimVP~\cite{gao2022simvp}     & 41.2M & 197.0G & 115.8 & 1511.5 & 0.9822 \\
SimVPv2~\cite{tan2211simvp}   & 11.3M & 74.6G  & \textbf{108.4} & 1441.0 & 0.9834 \\
TAU~\cite{tan2023temporal}    & 37.6M & 182.0G & 113.3 & \textbf{1390.7} & \textbf{0.9839} \\
ViT~\cite{dosovitskiy2021an}  & 28.3M & 239.0G & 136.3 & 1603.5 & 0.9796 \\
Swin Transformer~\cite{liu2021swin}& 38.8M & 188.0G & 133.2 & 1599.7 & 0.9799 \\
Uniformer~\cite{li2022uniformer}& 27.7M & 211.0G & 116.3 & 1497.7 & 0.9824 \\
MLP-Mixer~\cite{tolstikhin2021mlp}& 47.0M & 164.0G & 125.7 & 1511.9 & 0.9819 \\
ConvMixer~\cite{trockman2023patches}& \textbf{3.1M}  & 39.4G  & 115.8 & 1527.4 & 0.9822 \\
Poolformer~\cite{yu2022metaformer}& 31.2M & 156.0G & 118.4 & 1484.1 & 0.9827 \\
ConvNeXt~\cite{liu2022convnet} & 31.4M & 157.0G & 113.4 & 1469.7 & 0.9828 \\
VAN~\cite{guo2023visual} & 37.5M & 182.0G & 111.4 & 1454.5 & 0.9831 \\
HorNet~\cite{rao2022hornet} & 28.1M & 143.0G & 118.1 & 1481.1 & 0.9824 \\
MogaNet~\cite{moga}           & 8.6M  & 63.6G  & \underline{109.1} & 1446.4 & 0.9834 \\
PFGNet(Ours)        & 7.3M & 58.3G & 111.3 & \underline{1392.4} & \underline{0.9838} \\
\bottomrule
\end{tabular}
\label{tab:human_dataset_comparison}
\end{table}

\begin{figure}[htbp]
\centering
\includegraphics[width=0.9\linewidth]{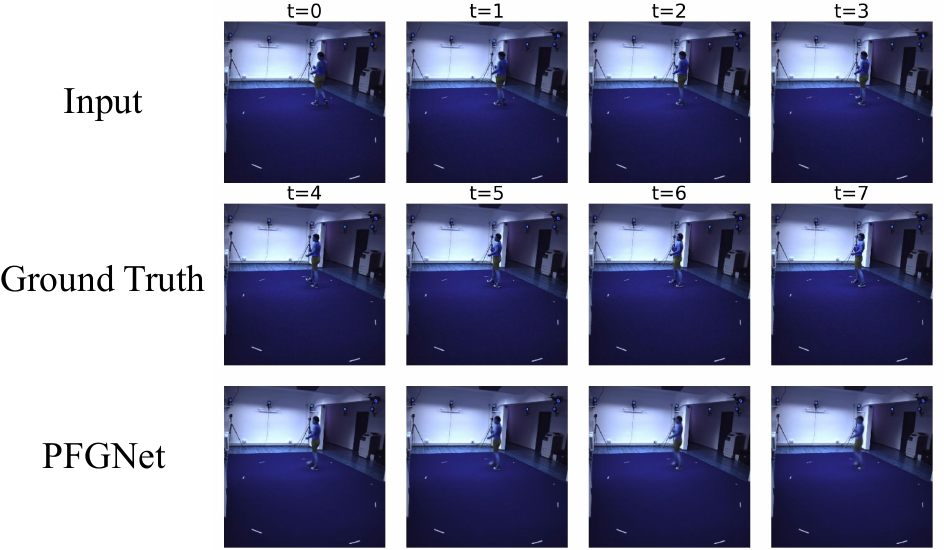}
\caption{
Qualitative results of PFGNet on Human3.6M. 
}
\label{fig:human_qual}
\vspace{-3mm}
\end{figure}

\paragraph{Human3.6M}
On the high-resolution Human3.6M dataset (256$\times$256, rich texture, complex articulated motion), PFGNet achieves a {near-optimal performance-complexity trade-off}: with only {7.3M parameters} and {58.3G FLOPs}, it attains {MAE = 1392.4} (2nd best) and {SSIM = 0.9838} (2nd best)—surpassing most recurrent baselines and all recurrent-free models except TAU (which uses 5.2$\times$ more parameters and 3.1$\times$ more FLOPs). In contrast, recurrent models (e.g., PredRNN++, MIM) require 3–6$\times$ more parameters and 10–18$\times$ more FLOPs to achieve comparable SSIM, while transformer-based recurrent-free designs sacrifice throughput for marginal error reduction. PFGNet thus {breaks the accuracy–efficiency barrier} in full-convolutional STPL, delivering {deployment-ready performance} without compromising long-term motion coherence or human detail preservation.

\subsection{Ablation Study}
To quantify the true contribution of each component, we conduct various groups of ablation experiments. The results are given in Tables~\ref{tab:ablation1} and \ref{tab:ablation2} and Figure~\ref{fig:ab_fig}. 

We first ablate the macro-structural design of PFGNet (Table~\ref{tab:ablation1} and Figure~\ref{fig:ab1}).
MSInit serves as a lightweight multi-scale initializer: removing it (model1) significantly degrades all metrics, confirming its role in providing diverse frequency-aware features for subsequent gating.
For multi-scale fusion, the softmax gating mechanism—through its ability to perform pixel-wise adaptive weighting—consistently surpasses fixed-weight strategies (model2). Overall, under the current setting, the \(3{\times}3\) center suppression kernel shows advantages over the \(5{\times}5\) one; nevertheless, the kernel size should generally remain aligned with the scale of the main convolution.
Figure~\ref{fig:ab1} shows that single-scale kernels, regardless of their size, consistently underperform the combination of multiple kernels. Interestingly, the largest kernel (\(k{=}31\)) achieves the {lowest MSE} yet the {highest SSIM} when used alone---providing strong empirical evidence for our hypothesis that {extremely large receptive fields are essential for STPL}. However, no single scale offers balanced spectral coverage: smaller kernels capture fine textures but miss global coherence, while larger ones excel in structure preservation but lose local detail. The {parallel fusion} of multiple scales achieves the {best performance}, confirming the necessity of multi-scale integration.

\begin{table}[t]
\small
\centering
\caption{Ablation on macro structure. We analyze the roles of MSInit and different fusion strategies under a unified kernel setting ($\mathcal{K}=\{9,15,31\}$).}
\setlength{\tabcolsep}{3.3pt}
\renewcommand{\arraystretch}{1.15}
\begin{tabular}{cc|c|c|c|c}
\hline
 &  & model1 & model2 & model3 & PFGNet \\
\hline
\multirow{1}{*}{\centering Archi}
& MSInit             &              & \checkmark & \checkmark & \checkmark \\
\hline
\multirow{2}{*}{\centering Fusion}
& mean               &              & \checkmark &            &  \\
& softmax            & \checkmark   &            & \checkmark & \checkmark \\
\hline
\multirow{2}{*}{\centering Center}
& $3{\times}3$       & \checkmark   & \checkmark &            & \checkmark \\
& $5{\times}5$       &              &            & \checkmark &            \\
\hline
\multirow{3}{*}{\centering TaxiBJ}
& MSE $\downarrow$   & 0.3119 & 0.3033 & 0.2995 & \textbf{0.2881} \\
& MAE $\downarrow$   & 14.89  & 14.89  & 14.90  & \textbf{14.75} \\
& SSIM $\uparrow$    & 0.9849 & 0.9849 & 0.9848 & \textbf{0.9857} \\
\hline
\multirow{2}{*}{\centering KTH 10$\rightarrow$20}
& SSIM $\uparrow$    & 0.909 &  0.909   & 0.904    & \textbf{0.911}       \\
& PSNR $\uparrow$    & 32.88 &  33.08   & 33.61    & \textbf{34.10}\\
\hline
\end{tabular}
\label{tab:ablation1}
\end{table}

\begin{figure}[t]
    \centering
    \begin{subfigure}[t]{0.49\linewidth}
        \centering
        \includegraphics[width=\linewidth, height=0.7\linewidth, keepaspectratio=false]{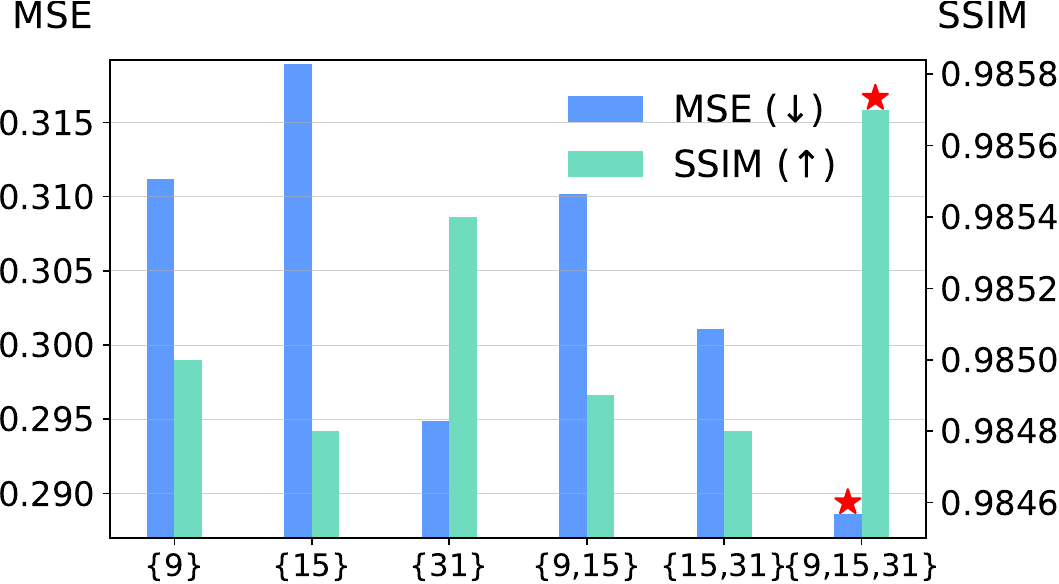}
        \caption{Ablation on kernel-set combinations in PFG block.}
        \label{fig:ab1}
    \end{subfigure}
    \hfill
    \begin{subfigure}[t]{0.49\linewidth}
        \centering
        \includegraphics[width=\linewidth, height=0.7\linewidth, keepaspectratio=false]{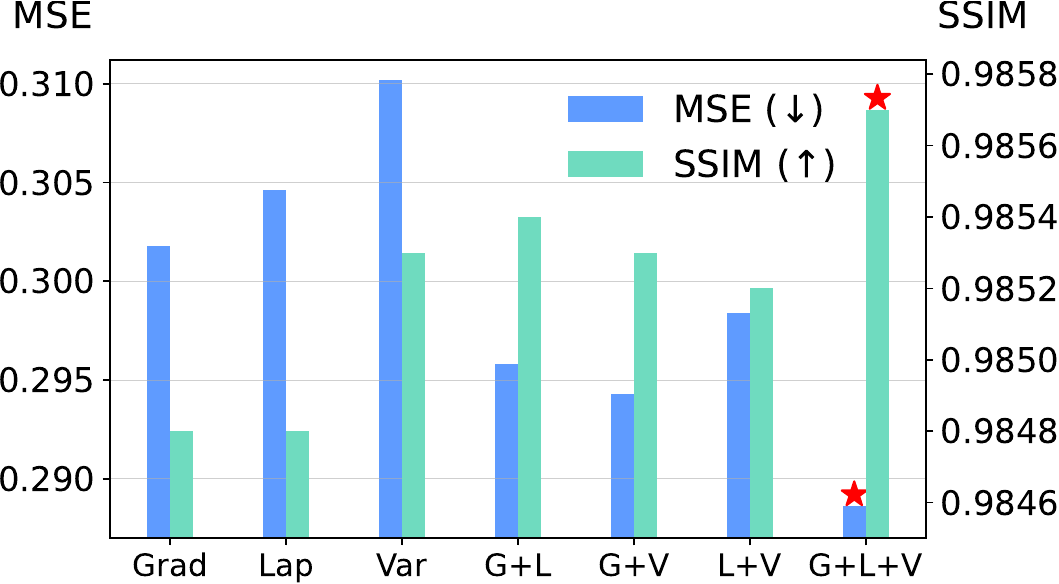}
        \caption{Ablation on frequency-cue combinations.}
        \label{fig:ab2}
    \end{subfigure}
    \vspace{-1mm}
    \caption{Ablation results on TaxiBJ.}
    \label{fig:ab_fig}
\end{figure}

\begin{table}[t]
\small
\centering
\caption{Ablation on detail mechanisms under full configuration (MSInit + softmax + all operators). We compare fixed vs.\ learnable $\beta$, and activation functions used for the suppression gate.}
\setlength{\tabcolsep}{1.5pt}
\renewcommand{\arraystretch}{1.15}
\begin{tabular}{cc|ccc|c|c}
\hline
 &  & model1 & model2 & model3 & model4 & PFGNet \\
\hline
\multirow{4}{*}{\centering $\beta$}
& $-1$                & \checkmark &            &            &            &            \\
& $1$                 &            & \checkmark &            &            &            \\
& $0$                 &            &            & \checkmark &            &            \\
& learnable           &            &            &            & \checkmark & \checkmark \\
\hline
\multirow{2}{*}{\centering Gate}
& tanh                & \checkmark & \checkmark & \checkmark &            & \checkmark \\
& sigmoid             &            &            &            & \checkmark &            \\
\hline
\multirow{3}{*}{\centering TaxiBJ}
& MSE $\downarrow$    & 0.3209 & 0.3286 & 0.2993 & 0.3142 & \textbf{0.2881} \\
& MAE $\downarrow$    & 14.96  & 15.02  & 14.86  & 15.00  & \textbf{14.75}  \\
& SSIM $\uparrow$     & 0.9849 & 0.9844 & 0.9850 & 0.9845 & \textbf{0.9857} \\
\hline
\multirow{2}{*}{\centering\shortstack{Moving MNIST\\(100 epochs)}}
& MSE $\downarrow$    & 28.06   & 28.14 & 27.84   & 28.04  & \textbf{27.61} \\
& SSIM $\uparrow$     & 0.9371  & 0.9369 & 0.9377 & 0.9372 & \textbf{0.9381} \\
\hline
\end{tabular}
\label{tab:ablation2}
\end{table}

We further analyze the internal mechanisms of the PFG block (Table~\ref{tab:ablation2} and Figure~\ref{fig:ab2}).  
The learnable center suppression parameter ($\beta$) significantly outperforms fixed values: $\beta=0$ (no suppression) retains redundant low-frequency background (model3), while fixed $\beta=\pm1$ (models 1--2) lacks spatial adaptivity.  
Using \textit{tanh} as the gating function for $\beta$ yields better results than \textit{sigmoid}, indicating that both positive and negative feedback contribute to feature modulation.  
The three frequency cues are complementary—removing any single cue or any pair leads to performance degradation, whereas using all three in parallel provides the most stable gating signal for pixel-wise scale selection.  

The optimal configuration—\textbf{MSInit + softmax fusion + $\mathcal{K}=\{9,15,31\}$ + learnable $\beta$ with tanh gating + all three frequency cues}—achieves the best results, validating the synergy between \emph{frequency-guided adaptive band-pass filtering} and \emph{separable large-kernel efficiency}, which enables PFGNet to outperform recurrent, attention-based, and small-kernel baselines with minimal computational overhead.

\section{Conclusion}
\label{sec:con}
In this work, we present PFGNet, a fully convolutional framework for spatiotemporal predictive learning that mimics biological center–surround processing via pixel-wise frequency-guided gating and learnable center suppression. Under the standardized OpenSTL protocol, PFGNet delivers SOTA or near-SOTA forecasting accuracy with substantially fewer parameters and FLOPs than recent recurrent, attention-based, or hybrid models. Lightweight, fully parallelizable, and plug-and-play, PFGNet provides a scalable real-time solution for high-precision predictive vision and is well suited for integration into larger backbones and deployment in applications such as autonomous systems and weather nowcasting.
\section*{Acknowledgments}
This work was supported in part by the National Natural Science Foundation of China under Grant No.~62406206 and by the Fundamental Research Funds for the Central Universities. We also extend our sincere appreciation to the providers of the datasets used in this study, whose contributions were essential to the experimental validation of the proposed method.
{
    \small
    \bibliographystyle{ieeenat_fullname}
    \bibliography{main}
}
\clearpage
\setcounter{page}{1}
\maketitlesupplementary

\section{Existence and Optimality of Ring-Shaped Pass Band}

In the main text, we established a \textbf{strong existence} result under monotonicity,
guaranteeing a ring-shaped pass band for $\beta_k \in (0, \beta_{\max})$.
Here we present a \textbf{weak existence} theorem that aligns with PFGNet's
implementation, where $\beta = \tanh(\beta_{\text{raw}}) \in (-1, 1)$.

\begin{theorem}[Weak Existence of Ring-Shaped Pass Band]
Let \(H_1, H_2 : [0, \pi] \to \mathbb{R}\) denote continuous radial frequency-response profiles.
Define $f(r) = H_1(r) - \beta H_2(r)$ for $\beta \in (-1, 1)$.
Assume there exist $0 \le c < a < b \le \pi$ such that
$$ f(c) \le 0,\quad f(a) > 0,\quad f(b) \le 0. $$
Then there exist $r_1$ and $r_2$ with $c \le r_1 < a < r_2 \le b$ such that
$$ f(r) > 0 \quad \text{for all } r \in (r_1, r_2), \text{ and } f(r_1) = f(r_2) = 0. $$
Thus $H_\beta(\omega) = H_1(\|\omega\|) - \beta H_2(\|\omega\|)$ has a non-degenerate
ring-shaped pass band $\{\omega : r_1 < \|\omega\| < r_2\}$.
\end{theorem}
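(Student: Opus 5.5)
The argument is a direct application of the intermediate value theorem, using the last zero of $f$ to the left of $a$ and the first zero to the right. Since $H_1$ and $H_2$ are continuous on $[0,\pi]$, so is $f = H_1 - \beta H_2$ for any fixed $\beta \in (-1,1)$; the restriction on $\beta$ plays no role beyond fixing a real constant. Define
\[
r_1 = \sup\{ r \in [c,a] : f(r) \le 0 \}, \qquad r_2 = \inf\{ r \in [a,b] : f(r) \le 0 \}.
\]
Both sets are nonempty, since $f(c)\le 0$ and $f(b)\le 0$. They are also closed, being preimages of $(-\infty,0]$ under a continuous map intersected with compact intervals. Hence the supremum and infimum are attained, so $f(r_1)\le 0$ and $f(r_2)\le 0$.

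Next I would pin down where $r_1$ and $r_2$ sit. Because $f(a)>0$ and the sets are closed, $a$ belongs to neither set, so $c \le r_1 < a < r_2 \le b$. For positivity on the open interval:
\begin{itemize}
\item any $r \in (r_1, a]$ satisfies $f(r)>0$ by maximality of $r_1$;
\item any $r \in [a, r_2)$ satisfies $f(r)>0$ by minimality of $r_2$.
\end{itemize}
Together these give $f>0$ on $(r_1,r_2)$.

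The only step that needs care is upgrading $f(r_i)\le 0$ to $f(r_i)=0$. Since $r_1<a$, the point $r_1$ is a limit from the right of points $r \in (r_1,a)$ where $f(r)>0$. Continuity then gives $f(r_1)\ge 0$, hence $f(r_1)=0$. The same one-sided limit argument from the left gives $f(r_2)=0$. Note that $r_1=c$ is allowed, in which case $f(c)=0$ is forced.

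The ring-shaped conclusion follows by substituting $r=\|\omega\|$. The set $\{\omega : r_1<\|\omega\|<r_2\}$ has nonempty interior because $r_1<r_2$, so it is non-degenerate, and $H_\beta(\omega)=f(\|\omega\|)>0$ on it. This step is immediate once the radial profiles are defined on $[0,\pi]$ and restricted to $\|\omega\|\le\pi$.
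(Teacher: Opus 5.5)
Your proof is correct and takes essentially the same route as the paper, which also sets $r_1=\max\{r\in[c,a]: f(r)\le 0\}$ and $r_2=\min\{r\in[a,b]: f(r)\le 0\}$ and gets positivity on $(r_1,r_2)$ from maximality and minimality. Your one-sided continuity argument for $f(r_1)=f(r_2)=0$ also fills in the step the paper passes over with ``clearly.''
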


\begin{proof}
Since $f$ is continuous with $f(c) \le 0$ and $f(a) > 0$, the set $S_1 = \{r \in [c, a] : f(r) \le 0\}$ is non-empty and closed. We define $r_1 = \max S_1$. Clearly, $c \le r_1 < a$ and $f(r_1) = 0$. By the definition of the maximum, $f(r) > 0$ for all $r \in (r_1, a]$.

Similarly, the set $S_2 = \{r \in [a, b] : f(r) \le 0\}$ is non-empty and closed because $f(a) > 0$ and $f(b) \le 0$. We define $r_2 = \min S_2$. Thus $a < r_2 \le b$ and $f(r_2) = 0$. By the definition of the minimum, $f(r) > 0$ for all $r \in [a, r_2)$.

Combining these results, we have $f(r) > 0$ on the continuous interval $(r_1, r_2)$. 
\end{proof}

In PFGNet, we set:
\begin{itemize}
    \item $H_1$: frequency response of a large-kernel convolution,
    \item $H_2$: frequency response of a small-kernel convolution,
    \item $\beta = \tanh(\beta_{\text{raw}}) \in (-1, 1)$: bounded center-suppression coefficient.
\end{itemize}
The large kernel decays slowly from DC to mid-frequencies, while the small kernel
decays rapidly. Their difference $f(r) = H_1(r) - \beta H_2(r)$ naturally satisfies:
\begin{itemize}
    \item $f(c) \le 0$ for small $c > 0$ (small kernel dominates near DC),
    \item $f(a) > 0$ in mid-frequencies (large kernel retains more energy),
    \item $f(b) \le 0$ near $b \approx \pi$ (both decay to zero).
\end{itemize}
Hence, the combination of large- and small-kernel convolutions ensures the
sign pattern required by the corrected weak theorem, yielding a non-degenerate
ring-shaped pass band in practice. We illustrate the frequency-selective behavior of the PFG block in Figure~\ref{fig:pfg-ring-filter}. 
Using a large kernel with a slowly decaying frequency response $H_{L_k}$ (e.g., exponential decay with rate 0.6) and a small kernel with a rapidly decaying response $H_S$ (e.g., Gaussian decay with variance parameter 2.5), their difference $H_k = H_{L_k} - \beta_k H_S$ (with $\beta_k = 0.75$) suppresses low-frequency background near DC, amplifies mid-frequency motion cues, and attenuates high-frequency noise, forming a ring-shaped band-pass filter as predicted by our theoretical analysis.

\begin{figure}[t]
    \centering
    \includegraphics[width=\linewidth]{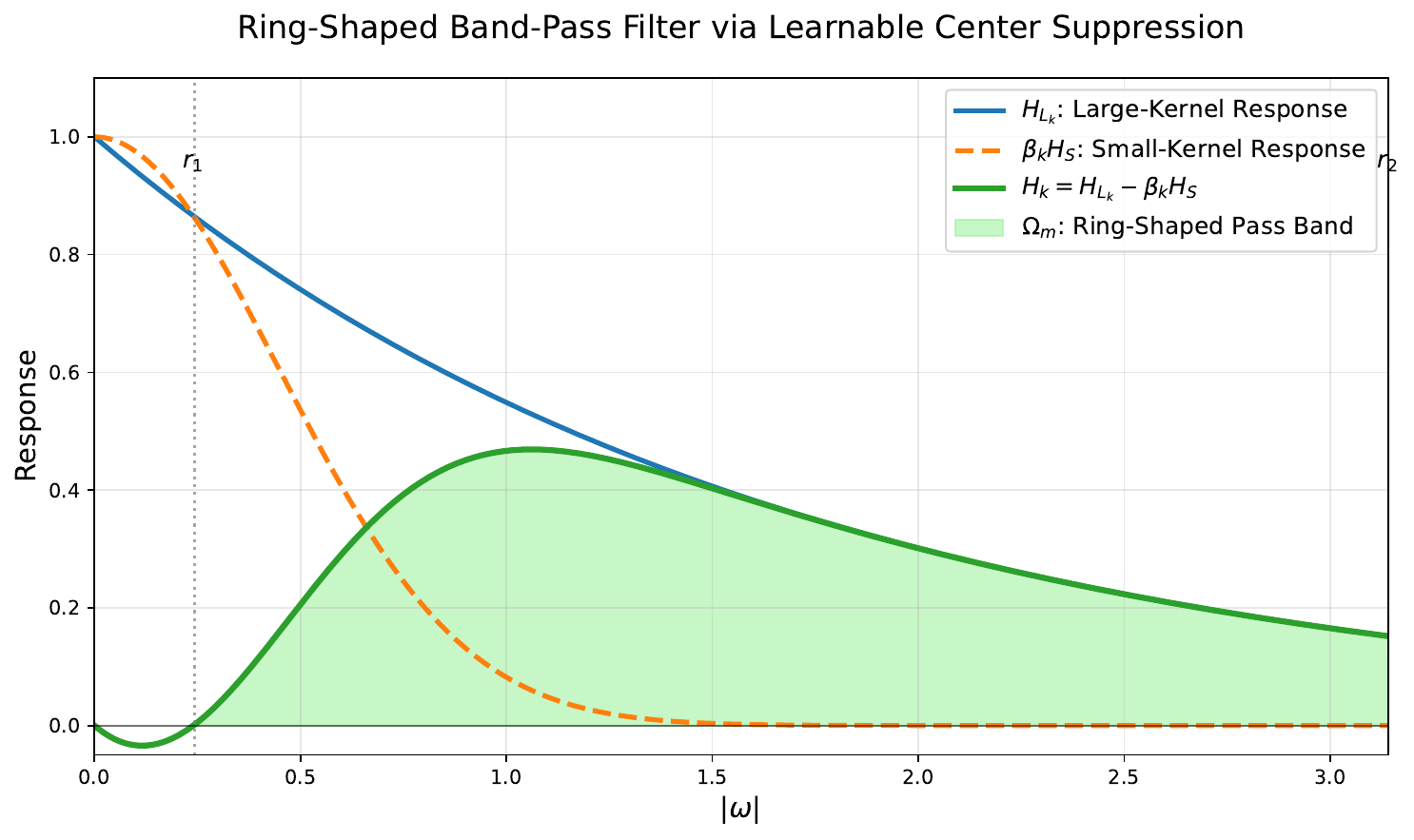}
    \caption{
        Frequency response of the PFG block. The large-kernel response $H_{L_k}$ decays slowly, while the small-kernel response $H_S$ decays rapidly. Their difference $H_k = H_{L_k} - \beta_k H_S$ (with $\beta_k{=}0.75$) is positive within the green region, forming a ring-shaped band-pass filter as stated in the main text.
    }
    \label{fig:pfg-ring-filter}
    \vspace{-4mm} 
\end{figure}

Furthermore, we have the following theorem:
\begin{theorem}[Existence of an SNR-maximizing $\beta^\star$]
Assume the input signal has spectral power $P_S(\omega) \ge 0$ and the additive noise
is white with constant power $P_N(\omega)\equiv\sigma_N^2>0$.
Let the composite filter response be $H_\beta = H_L - \beta H_S$, where
$H_L$ and $H_S$ denote the frequency responses of the large and small
kernels, respectively.
Define the signal-to-noise ratio
$$ \mathrm{SNR}(\beta) = \frac{\int |H_\beta(\omega)|^2 P_S(\omega)\,d\omega}{\int |H_\beta(\omega)|^2 P_N(\omega)\,d\omega}. $$
If $H_L$ and $H_S$ are linearly independent in $L^2([0,\pi])$ and $\int |H_S|^2 P_S\,d\omega > 0$, then $\mathrm{SNR}(\beta)$ admits at least one finite stationary point $\beta^\star$ satisfying $\tfrac{d}{d\beta}\mathrm{SNR}(\beta)=0$.
\label{theorem:2}
\end{theorem}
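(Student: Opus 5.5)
The plan is to reduce $\mathrm{SNR}(\beta)$ to a ratio of two real quadratics in $\beta$, show the denominator never vanishes, and then argue by a compactness argument on the extended real line. I take $H_L,H_S$ to be real-valued radial profiles, as in the preceding theorem, so that $|H_\beta|^2=(H_L-\beta H_S)^2$. Expanding gives
\begin{equation*}
N(\beta)=A-2B\beta+C\beta^2,\qquad D(\beta)=\sigma_N^2\,(a-2b\beta+c\beta^2),
\end{equation*}
with the coefficients
\begin{equation*}
A=\int H_L^2P_S,\quad B=\int H_LH_SP_S,\quad C=\int H_S^2P_S,
\end{equation*}
\begin{equation*}
a=\int H_L^2,\quad b=\int H_LH_S,\quad c=\int H_S^2,
\end{equation*}
so that $\mathrm{SNR}(\beta)=N(\beta)/D(\beta)$. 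By hypothesis $C>0$.

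The first key step is to show that $D(\beta)>0$ for every real $\beta$. Linear independence of $H_L$ and $H_S$ in $L^2([0,\pi])$ means $H_L-\beta H_S\neq 0$ in $L^2$ for every $\beta$. Hence $D(\beta)=\sigma_N^2\|H_L-\beta H_S\|^2>0$. Equivalently, $ac-b^2>0$ by the strict Cauchy--Schwarz inequality, and in particular $c>0$. Consequently $\mathrm{SNR}$ is a rational function with no real poles, so it is $C^\infty$ on all of $\mathbb{R}$.

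The second step is to examine the behaviour at infinity. Since $c>0$ and the leading coefficients are $C$ and $\sigma_N^2c$, we get $\mathrm{SNR}(\beta)\to L:=C/(\sigma_N^2 c)$ as $\beta\to+\infty$ and also as $\beta\to-\infty$. The limits at both ends are the same. This symmetry is the crux of the argument; it is also where $C>0$ enters, since it makes $L$ a genuine positive limit.

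The final step, which I expect to be the only delicate point, is the case analysis. If $\mathrm{SNR}\equiv L$, then every $\beta$ is a stationary point and we are done. Otherwise there is some $\beta_0$ with $\mathrm{SNR}(\beta_0)\neq L$; say $\mathrm{SNR}(\beta_0)>L$, the other case being symmetric. Choose $R$ large enough that $|\beta_0|<R$ and $\mathrm{SNR}(\beta)<\mathrm{SNR}(\beta_0)$ whenever $|\beta|\ge R$. On the compact interval $[-R,R]$ the continuous function $\mathrm{SNR}$ attains its maximum. That maximum is at least $\mathrm{SNR}(\beta_0)$, so it cannot lie at the endpoints; it is therefore attained at an interior point $\beta^\star$. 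By Fermat's theorem $\tfrac{d}{d\beta}\mathrm{SNR}(\beta^\star)=0$.

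As a cross-check, the numerator $N'D-ND'$ of the derivative has its cubic terms cancel, so it is a polynomial of degree at most two. The argument above shows that this polynomial has a real root unless it vanishes identically.

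I would also note one limitation. The result only guarantees a stationary point in $\mathbb{R}$, not one inside $(-1,1)$. The implication that this stationary point improves on $\beta=0$ (i.e.\ on $H_L$ alone) would need an additional condition, for instance $\mathrm{SNR}(0)=A/(\sigma_N^2a)$ not being the maximum. That condition lies beyond the statement as given.
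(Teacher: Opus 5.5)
Your proposal is correct and follows the paper's proof essentially step for step. You use the same ingredients: the quadratic forms $N$ and $D$, positivity of $D$ from linear independence, the common limit $L=C/(\sigma_N^2 c)$ at $\pm\infty$, the three-case extremum argument closed by Fermat's theorem, and the cubic-cancellation cross-check; your explicit choice of $R$, which forces the extremum into the interior of $[-R,R]$, makes rigorous the paper's appeal to the Extreme Value Theorem on the non-compact line.
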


\begin{proof}
Define the signal energy $N(\beta)$ and noise energy $D(\beta)$ as quadratic forms:
$$ N(\beta) := \int |H_L - \beta H_S|^2 P_S(\omega)\, d\omega = A - 2\beta B + \beta^2 C, $$
{\footnotesize$$ D(\beta) := \int |H_L - \beta H_S|^2 P_N(\omega)\, d\omega = \sigma_N^2 \left( \tilde{A} - 2\beta \tilde{B} + \beta^2 \tilde{C} \right), $$}
where the coefficients are given by the respective integrals (e.g., $C = \int |H_S|^2 P_S\, d\omega$ and $\tilde{C} = \int |H_S|^2\, d\omega$).

By the linear independence of $H_L$ and $H_S$ in $L^2$, the denominator $D(\beta)$ is strictly positive for all $\beta \in \mathbb{R}$, ensuring that $\mathrm{SNR}(\beta) = \frac{N(\beta)}{D(\beta)}$ is continuously differentiable on the entire real line.

Consider the asymptotic behavior of $\mathrm{SNR}(\beta)$ as $\beta \to \pm\infty$. Dividing the numerator and denominator by $\beta^2$ yields:
$$ \lim_{\beta \to \pm\infty} \mathrm{SNR}(\beta) = \lim_{\beta \to \pm\infty} \frac{A/\beta^2 - 2B/\beta + C}{\sigma_N^2 (\tilde{A}/\beta^2 - 2\tilde{B}/\beta + \tilde{C})} = \frac{C}{\sigma_N^2 \tilde{C}}. $$
By assumption, $C > 0$ and $\tilde{C} > 0$, so this limit is a finite positive constant, which we denote as $L$.

We now examine the continuous function $\mathrm{SNR}(\beta)$. There are three possible cases:
1. $\mathrm{SNR}(\beta) \equiv L$ for all $\beta$. Then every $\beta \in \mathbb{R}$ is a stationary point, and the theorem holds trivially.
2. There exists some $\beta_0 \in \mathbb{R}$ such that $\mathrm{SNR}(\beta_0) > L$. Since the function is continuous and decays to $L$ at both infinities, by the Extreme Value Theorem, $\mathrm{SNR}(\beta)$ must attain a global maximum at some finite point $\beta^\star$.
3. There exists some $\beta_0 \in \mathbb{R}$ such that $\mathrm{SNR}(\beta_0) < L$. By the same reasoning, $\mathrm{SNR}(\beta)$ attains a global minimum at some finite point $\beta^\star$.

In cases 2 and 3, Fermat's theorem guarantees that at the local extremum $\beta^\star$, the derivative must vanish: $\frac{d}{d\beta}\mathrm{SNR}(\beta^\star) = 0$.

Furthermore, setting the derivative to zero yields:
$$ N'(\beta)D(\beta) - N(\beta)D'(\beta) = 0. $$
Expanding this expression, the $\beta^3$ terms mathematically cancel out ($C\tilde{C}\beta^3 - \tilde{C}C\beta^3 = 0$), leaving a polynomial equation of degree at most 2:
$$ (B\tilde{C} - C\tilde{B})\beta^2 + (C\tilde{A} - A\tilde{C})\beta + (\tilde{B}A - B\tilde{A}) = 0. $$
This confirms that $\mathrm{SNR}(\beta)$ has at most two finite stationary points, and we have proven that at least one must exist.
\end{proof}

\begin{lemma}[SNR Advantage of $H_L - \beta H_S$ over $H_L$]
Assume the input signal has spectral power $P_S(\omega) \ge 0$ and additive white noise with power $\sigma_N^2 > 0$. 
Let $H_L$ and $H_S$ be the frequency responses of the large and small kernels, respectively, 
and assume they are linearly independent in $L^2([0,\pi])$. 
Define the composite filter $H_\beta = H_L - \beta H_S$ and the SNR as
$$ \mathrm{SNR}(\beta) = \frac{\int |H_\beta|^2 P_S\, d\omega}{\int |H_\beta|^2 \sigma_N^2\, d\omega}. $$
Assume further that $H_L$ is not already a stationary point of the SNR in the direction of $H_S$, meaning
{\footnotesize$$ \int \mathrm{Re}(H_L \overline{H_S}) P_S\, d\omega \int |H_L|^2\, d\omega \neq \int |H_L|^2 P_S\, d\omega \int \mathrm{Re}(H_L \overline{H_S})\, d\omega $$}
(i.e., $B\tilde{A} \neq A\tilde{B}$ in the notation below).
Then, there exists $\hat{\beta} \neq 0$ such that
$$ \mathrm{SNR}(\hat{\beta}) > \mathrm{SNR}(0). $$
\end{lemma}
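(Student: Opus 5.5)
We would prove the lemma by a first-order perturbation argument around $\beta = 0$, reusing the quadratic-form notation from the proof of Theorem~\ref{theorem:2}. Write
\[
N(\beta)=A-2\beta B+\beta^2 C, \qquad D(\beta)=\sigma_N^2(\tilde A-2\beta\tilde B+\beta^2\tilde C),
\]
where
\[
A=\int|H_L|^2P_S,\quad B=\int \mathrm{Re}(H_L\overline{H_S})P_S,\quad \tilde A=\int|H_L|^2,\quad \tilde B=\int\mathrm{Re}(H_L\overline{H_S}).
\]
The expansion of $|H_L-\beta H_S|^2$ for real $\beta$ produces exactly the real part $\mathrm{Re}(H_L\overline{H_S})$ in the cross term, which matches the notation of the statement.

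\emph{Step 1: positivity of the denominator.} By linear independence, $H_L-\beta H_S\neq 0$ in $L^2$ for every real $\beta$. Hence $D(\beta)>0$ on all of $\mathbb{R}$, exactly as in Theorem~\ref{theorem:2}. In particular $\tilde A>0$, so $\mathrm{SNR}(\beta)=N(\beta)/D(\beta)$ is a smooth (indeed rational) function on $\mathbb{R}$.

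\emph{Step 2: derivative at zero.} By the quotient rule,
\[
\mathrm{SNR}'(0)=\frac{N'(0)D(0)-N(0)D'(0)}{D(0)^2}
=\frac{(-2B)\sigma_N^2\tilde A-A(-2\sigma_N^2\tilde B)}{\sigma_N^4\tilde A^2}
=\frac{2(A\tilde B-B\tilde A)}{\sigma_N^2\tilde A^2}.
\]
This is the constant term of the quadratic stationarity equation already derived in Theorem~\ref{theorem:2}, up to a positive factor. The hypothesis $B\tilde A\neq A\tilde B$ therefore gives $\mathrm{SNR}'(0)\neq 0$.

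\emph{Step 3: choose $\hat\beta$.} Since $\mathrm{SNR}$ is differentiable at $0$ with nonzero derivative, set $s=\mathrm{sign}(\mathrm{SNR}'(0))$. For sufficiently small $\varepsilon>0$ we have $\mathrm{SNR}(s\varepsilon)>\mathrm{SNR}(0)$, because
\[
\mathrm{SNR}(s\varepsilon)-\mathrm{SNR}(0)=\varepsilon\,|\mathrm{SNR}'(0)|+o(\varepsilon).
\]
Taking $\hat\beta=s\varepsilon\neq0$ completes the proof. If one wants $\hat\beta\in(-1,1)$ to match the $\tanh$ parameterization, it suffices to take $\varepsilon<1$, which is always possible.

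There is no real obstacle here. The only points needing care are the sign and real-part bookkeeping in the cross term, and the strict positivity of $\tilde A$, which follows from linear independence. Optionally, we could remark that the global maximizer $\beta^\star$ from Theorem~\ref{theorem:2}, which exists in case 2 there, then strictly beats $\beta=0$.
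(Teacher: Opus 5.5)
Your proof is correct and is essentially the paper's argument. The paper clears denominators and studies $\Delta(\beta)=N(\beta)\tilde A - A D(\beta)/\sigma_N^2 = \tilde A D(\beta)\bigl(\mathrm{SNR}(\beta)-\mathrm{SNR}(0)\bigr)$, whose linear coefficient $-2(B\tilde A - A\tilde B)$ is a positive multiple of your $\mathrm{SNR}'(0)$; it then picks a small $\hat\beta$ of that sign, exactly as you do, and your version is slightly tidier in stating explicitly that $\tilde A>0$ follows from linear independence.
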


\begin{proof}
Define the energy integrals
$$ N(\beta) = A - 2\beta B + \beta^2 C, \qquad D(\beta) = \sigma_N^2(\tilde{A} - 2\beta\tilde{B} + \beta^2\tilde{C}), $$
so that $\mathrm{SNR}(\beta) = N(\beta)/D(\beta)$ and $\mathrm{SNR}(0) = A/(\sigma_N^2\tilde{A})$.
Consider the difference numerator:
$$ \Delta(\beta) := N(\beta)\tilde{A} - A \frac{D(\beta)}{\sigma_N^2} = -2\beta(B\tilde{A}-A\tilde{B}) + \beta^2(C\tilde{A}-A\tilde{C}). $$
Clearly $\Delta(0)=0$. 
By our assumption, the coefficient of the linear term $K = -2(B\tilde{A}-A\tilde{B})$ is strictly non-zero. 
For values of $\beta$ sufficiently close to $0$, the linear term $K\beta$ strictly dominates the quadratic term $\beta^2(C\tilde{A}-A\tilde{C})$. 
Since $K \neq 0$, we can choose a $\hat{\beta}$ with a sufficiently small absolute value and the same sign as $K$. 
This guarantees that $\Delta(\hat{\beta}) \approx K\hat{\beta} > 0$.

Since $D(\hat{\beta}) > 0$ and $\tilde{A} > 0$, the inequality $\Delta(\hat{\beta}) > 0$ directly implies
$$ N(\hat{\beta})\tilde{A} > A \frac{D(\hat{\beta})}{\sigma_N^2} \implies \frac{N(\hat{\beta})}{D(\hat{\beta})} > \frac{A}{\sigma_N^2\tilde{A}}. $$
Thus, $\mathrm{SNR}(\hat{\beta}) > \mathrm{SNR}(0)$.
\end{proof}

In PFGNet, we learn a per-channel $\beta$ in each branch and share it across spatial locations via backpropagation.
Since local spectral content varies across pixels, such as motion versus static regions, pixel-wise softmax fusion enables spatially varying optimal denoising, while the channel-wise $\beta$ provides learnable center suppression, outperforming any fixed global suppression.
Moreover, subtracting $\beta H_S$ can be beneficial: by Lemma~1, there exists a coefficient $\hat{\beta} \neq 0$ such that the filter $H_L - \hat{\beta} H_S$ achieves a strictly higher SNR than the plain large-kernel filter $H_L$.

This is empirically corroborated in Figure~\ref{fig:beta}.
On Moving MNIST, learned channel-wise $\tanh(\beta)$ values are symmetrically distributed around zero across all branches ($K{=}9,15,31$), with significant mass at $\pm 1$, indicating that both enhancement and suppression of peripheral responses are utilized to model simple digit motion.
On TaxiBJ, a complex traffic dataset, $\tanh(\beta)$ is centered at zero with reduced variance for larger kernels, reflecting balanced and mild center modulation to preserve rich mid-frequency flow patterns.
Notably, the symmetric usage of positive and negative $\beta$ on Moving MNIST and the near-zero mean on TaxiBJ are consistent with our SNR analysis: the learned channel-wise $\beta$ adapts in sign and magnitude to local spectral statistics, supporting the view that pixel-wise softmax gating together with channel-wise suppression improves denoising flexibility in practice.

\begin{figure}[t]
  \centering
  \begin{minipage}{0.485\columnwidth}
    \centering
    \includegraphics[width=\linewidth,height=0.56\linewidth]{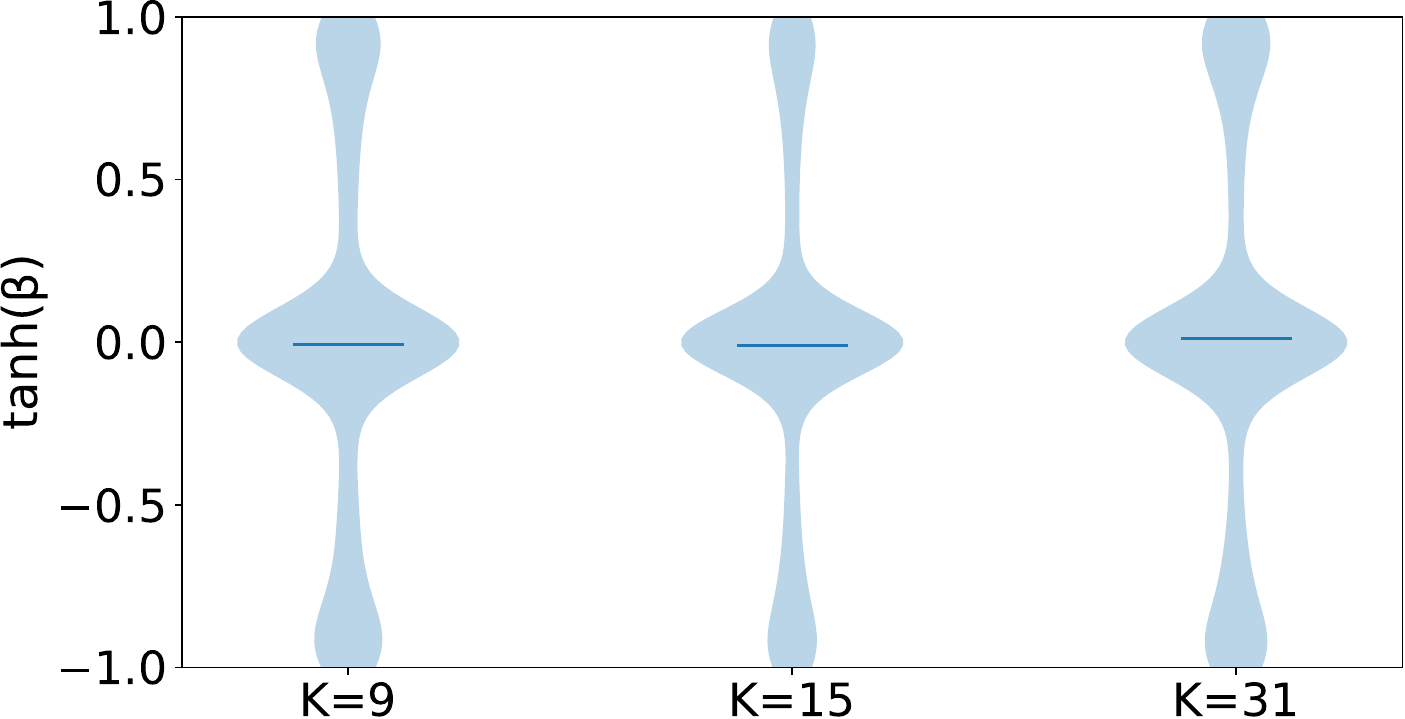}
    \caption*{(a) tanh($\beta$) by branch distribution on Moving MNIST}
  \end{minipage}\hfill
  \begin{minipage}{0.48\columnwidth}
    \centering
    \includegraphics[width=\linewidth,height=0.56\linewidth]{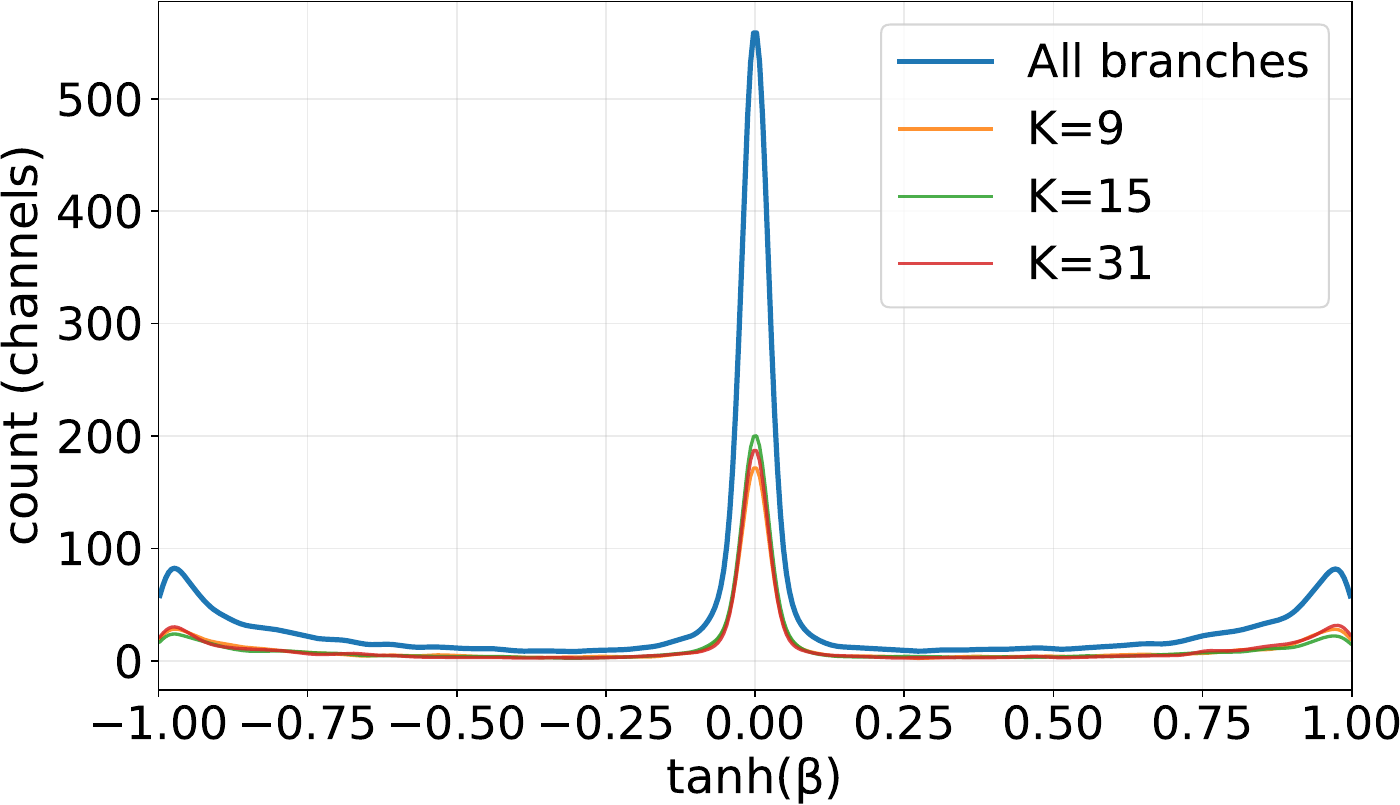}
    \caption*{(b) channel-wise tanh($\beta$) value distribution on Moving MNIST}
  \end{minipage}

  \vspace{2mm}

  \begin{minipage}{0.485\columnwidth}
    \centering
    \includegraphics[width=\linewidth,height=0.56\linewidth]{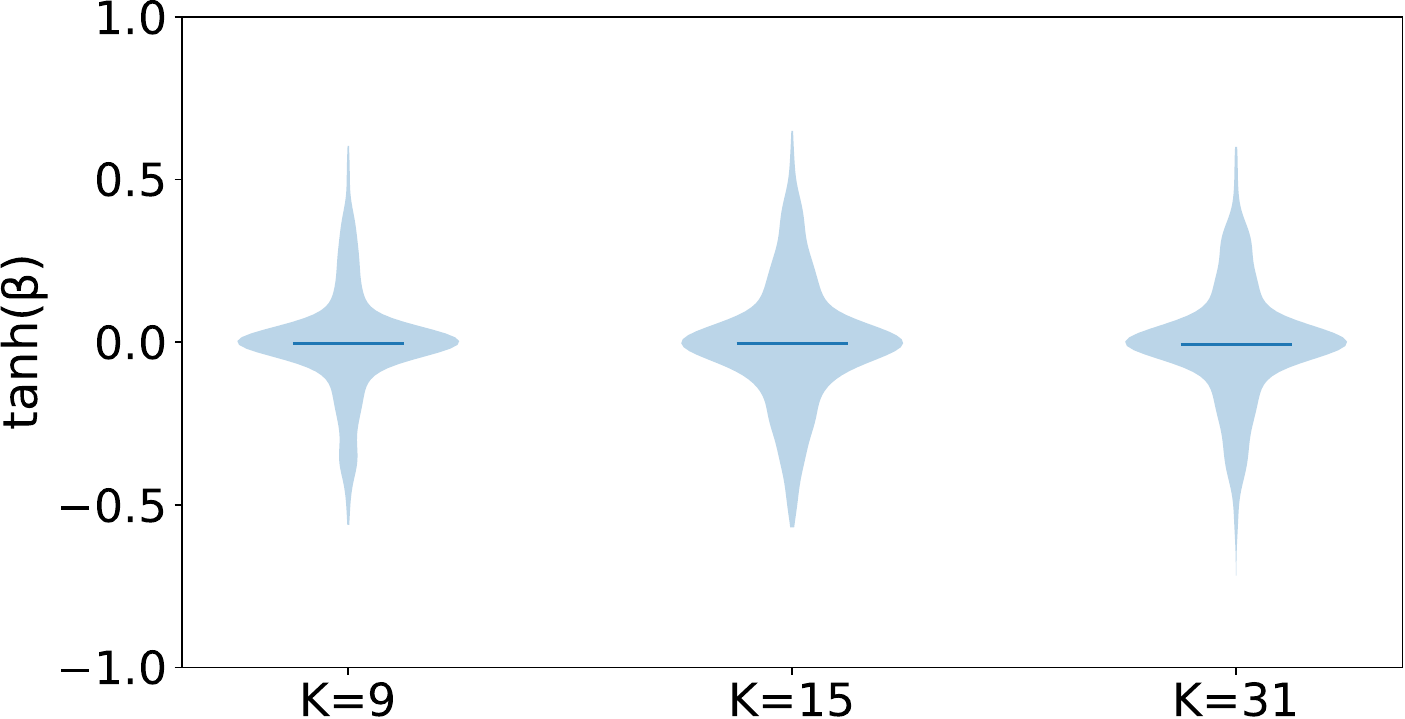}
    \caption*{(c) tanh($\beta$) by branch distribution on TaxiBJ}
  \end{minipage}\hfill
  \begin{minipage}{0.48\columnwidth}
    \centering
    \includegraphics[width=\linewidth,height=0.56\linewidth]{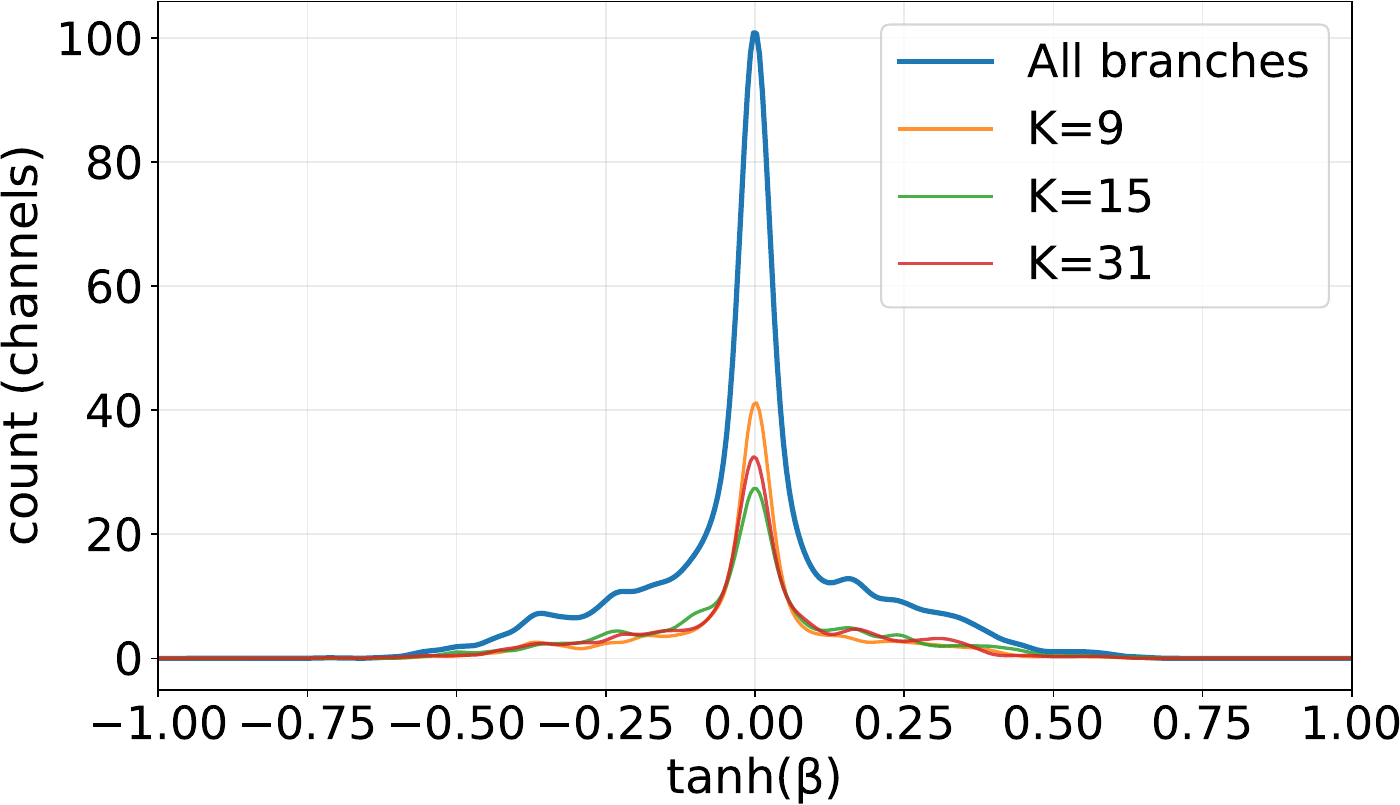}
    \caption*{(d) channel-wise tanh($\beta$) value distribution on TaxiBJ}
  \end{minipage}

  \vspace{1mm}
  \caption{Visualization of $\tanh(\beta)$ on Moving MNIST and TaxiBJ datasets.
  Each pair shows (a,c) the branch-wise $\tanh(\beta)$ distributions and (b,d) the smoothed channel-wise count curves,
where the horizontal axis denotes $\tanh(\beta)$ values and the vertical axis indicates the number of channels.}
  \label{fig:beta}
  \vspace{-3mm} 
\end{figure}

\section{Metric Definitions}

\begingroup
\allowdisplaybreaks

\paragraph{Notation}
\noindent
Let the prediction and ground truth be \(\hat{\mathbf{Y}}, \mathbf{Y} \in \mathbb{R}^{N \times T \times C \times H \times W}\). Dimensions: \(N\) is the batch size, \(T\) is the temporal length, \(C\) is the number of channels, and \(H, W\) are the spatial sizes. An element \(\hat{\mathbf{Y}}_{n,t,c,h,w}\) (or \(\mathbf{Y}_{n,t,c,h,w}\)) is the value at batch index \(n\), time step \(t\), channel \(c\), and spatial location \((h,w)\). Define the per-frame spatial size (including channels) as \(S = C H W\). All scalar metrics are averaged over the batch and time dimensions \((N, T)\).

\subsection{Mean Squared Error (MSE)}

\noindent\textbf{Non-spatially normalized}
{\footnotesize\[
\begin{aligned}
\mathrm{MSE}
&=\sum_{c=1}^{C}\sum_{h=1}^{H}\sum_{w=1}^{W}
\Biggl(\frac{1}{NT}\sum_{n=1}^{N}\sum_{t=1}^{T}
\bigl(\hat{\mathbf{Y}}_{n,t,c,h,w}-\mathbf{Y}_{n,t,c,h,w}\bigr)^{2}\Biggr).
\end{aligned}
\]}

\noindent\textbf{Spatially normalized}
\[
\begin{gathered}
\mathrm{MSE}_{\text{norm}} = \\
\frac{1}{NT}\sum_{n=1}^{N}\sum_{t=1}^{T}
\Biggl(\frac{1}{S}\sum_{c=1}^{C}\sum_{h=1}^{H}\sum_{w=1}^{W}
\bigl(\hat{\mathbf{Y}}_{n,t,c,h,w}-\mathbf{Y}_{n,t,c,h,w}\bigr)^{2}\Biggr).
\end{gathered}
\]

In practice, the spatially normalized MSE is used during training and validation,
while the non-spatially normalized version is reported during testing.

\subsection{Mean Absolute Error (MAE)}

{\footnotesize\[
\begin{aligned}
\mathrm{MAE}
&=\sum_{c=1}^{C}\sum_{h=1}^{H}\sum_{w=1}^{W}
\Biggl(\frac{1}{NT}\sum_{n=1}^{N}\sum_{t=1}^{T}
\bigl|\hat{\mathbf{Y}}_{n,t,c,h,w}-\mathbf{Y}_{n,t,c,h,w}\bigr|\Biggr).
\end{aligned}
\]}

\subsection{Peak Signal-to-Noise Ratio (PSNR)}

\noindent
Each pixel value \(a\) is converted to 8-bit as \(\tilde{a} = \mathrm{uint8}(255\,a)\).
For each sample frame \((n,t)\), the per-pixel mean squared error is
\[
\begin{aligned}
\mathrm{MSE}_{n,t}
&=\frac{1}{S}\sum_{c=1}^{C}\sum_{h=1}^{H}\sum_{w=1}^{W}
\bigl(\tilde{\hat{\mathbf{Y}}}_{n,t,c,h,w}-\tilde{\mathbf{Y}}_{n,t,c,h,w}\bigr)^{2}.
\end{aligned}
\]
With the dynamic range upper bound \(I_{\max}=255\), the frame-level and final PSNR are
\[
\begin{aligned}
\mathrm{PSNR}_{n,t}
&=20\log_{10}(I_{\max})-10\log_{10}\!\left(\mathrm{MSE}_{n,t}\right), \\
\mathrm{PSNR}
&=\frac{1}{NT}\sum_{n=1}^{N}\sum_{t=1}^{T}\mathrm{PSNR}_{n,t}.
\end{aligned}
\]

\subsection{Structural Similarity (SSIM)}

\noindent
For each frame, skimage.metrics.structural\_similarity is applied in the \([0,1]\) floating domain and averaged over \((N,T)\).
The frame-level SSIM uses the standard form.
With a Gaussian window \(G\),
\[
\begin{gathered}
\mu_x=G*x,\quad \mu_y=G*y,\quad
\sigma_x^{2}=G*(x^{2})-\mu_x^{2},\\
\sigma_y^{2}=G*(y^{2})-\mu_y^{2},\quad
\sigma_{xy}=G*(xy)-\mu_x\mu_y,
\end{gathered}
\]
where \(x\) and \(y\) denote two corresponding image patches, and
\(\mu_x, \mu_y, \sigma_x^{2}, \sigma_y^{2}, \sigma_{xy}\) are their local means,
variances, and covariance computed using the Gaussian window.
The SSIM for a pair of patches is defined as
\[
\mathrm{SSIM}_{\text{patch}}(x,y)
=
\frac{(2\mu_x\mu_y+C_{1})(2\sigma_{xy}+C_{2})}
{(\mu_x^{2}+\mu_y^{2}+C_{1})(\sigma_x^{2}+\sigma_y^{2}+C_{2})},
\]
where the dynamic range upper bound is \(I_{\max}=1\) and
\(C_{1}=(0.01\,I_{\max})^{2}\),
\(C_{2}=(0.03\,I_{\max})^{2}\).
The SSIM of an entire frame \(\hat{\mathbf{Y}}_{n,t},\mathbf{Y}_{n,t}\) is obtained by averaging over all overlapping patches:
\[
\mathrm{SSIM}_{\text{frame}}\!\left(\hat{\mathbf{Y}}_{n,t},\mathbf{Y}_{n,t}\right)
= \frac{1}{|\mathcal{P}|}
  \sum_{(x,y)\in\mathcal{P}} 
  \mathrm{SSIM}_{\text{patch}}(x,y),
\]
where \(\mathcal{P}\) denotes the set of all patches.
The final SSIM is
\[
\mathrm{SSIM}
=\frac{1}{NT}\sum_{n=1}^{N}\sum_{t=1}^{T}
\mathrm{SSIM}_{\text{frame}}\!\left(\hat{\mathbf{Y}}_{n,t},\mathbf{Y}_{n,t}\right).
\]

\section{Additional experimental results}
\label{sec:rationale}

Furthermore, we extend our evaluation to the Moving Fashion-MNIST (MFMNIST) dataset, with baseline results obtained from OpenSTL~\cite{tan2023openstl} under identical experimental settings. Figure~\ref{fig:mfmnist_qual} presents qualitative results of PFGNet. As shown in Table~\ref{tab:performance_comparison_more}, PFGNet achieves the \emph{highest accuracy among all recurrent-free models} 
(MSE: 23.55, SSIM: 0.9024) while maintaining moderate complexity (41.3M parameters, 15.2G FLOPs). 
Notably, compared to recurrent-based PredRNN (116.0G FLOPs) and PredRNN++ (171.7G FLOPs), 
PFGNet delivers \emph{significantly lower computational cost} 
with comparable predictive performance, 
highlighting its efficiency in spatiotemporal forecasting.

\begin{table}[htbp]
\scriptsize
\centering
\caption{Quantitative comparison on the Moving FMNIST dataset.}
\begin{tabular}{c c c c c c}
\toprule
Method & Params & FLOPs & MSE $\downarrow$ & MAE $\downarrow$ & SSIM $\uparrow$ \\
\midrule
\multicolumn{6}{c}{Recurrent-based Methods} \\
\midrule
ConvLSTM-S~\cite{shi2015convolutional} & 15.0M & 56.8G  & 28.87 & 113.20 & 0.8793 \\
ConvLSTM-L~\cite{shi2015convolutional} & 33.8M & 127.0G & 25.51 & 104.85 & 0.8928 \\
PredNet~\cite{lotter2017deep}          & 12.5M & 8.6G   & 185.94 & 318.30 & 0.6713 \\
PhyDNet~\cite{guen2020disentangling}   & \textbf{3.1M}  & 15.3G  & 34.75  & 125.66 & 0.8567 \\
PredRNN~\cite{wang2017predrnn}         & 23.8M & 116.0G & \underline{22.01}  & \textbf{91.74}  & \underline{0.9091} \\
PredRNN++~\cite{wang2018predrnn++}     & 38.6M & 171.7G & \textbf{21.71}  & \underline{91.97}  & \textbf{0.9097} \\
MIM~\cite{wang2019memory}              & 38.0M & 179.2G & 23.09  & 96.37  & 0.9043 \\
MAU~\cite{chang2021mau}                & 4.5M  & 17.8G  & 26.56  & 104.39 & 0.8722 \\
E3D-LSTM~\cite{wang2018eidetic}        & 51.0M & 298.9G & 35.35  & 110.09 & 0.8722 \\
PredRNNv2~\cite{wang2022predrnn}      & 23.9M & 116.6G & 24.13  & 97.46  & 0.9004 \\
DMVFN~\cite{hu2023dynamic}            & \underline{3.5M}  & \textbf{0.2G}   & 118.32 & 220.02 & 0.7572 \\
\midrule
\multicolumn{6}{c}{Recurrent-free Methods} \\
\midrule
SimVP~\cite{gao2022simvp}              & 58.0M & 19.4G  & 30.77  & 113.94 & 0.8740 \\
SimVPv2~\cite{tan2211simvp}            & 46.8M & 16.5G  & 25.86  & 101.22 & 0.8933 \\
TAU~\cite{tan2023temporal}             & 44.7M & 16.0G  & 24.24  & 96.72  & 0.8995 \\
ViT~\cite{dosovitskiy2021an}           & 46.1M & 16.9G  & 31.05  & 115.59 & 0.8712 \\
Swin Transformer~\cite{liu2021swin}    & 46.1M & 16.4G  & 28.66  & 108.93 & 0.8815 \\
Uniformer~\cite{li2022uniformer}       & 44.8M & 16.5G  & 29.56  & 111.72 & 0.8779 \\
MLP-Mixer~\cite{tolstikhin2021mlp}     & 38.2M & 14.7G  & 28.83  & 109.51 & 0.8803 \\
ConvMixer~\cite{trockman2023patches}   & 3.9M  & \underline{5.5G}   & 31.21  & 115.74 & 0.8709 \\
Poolformer~\cite{yu2022metaformer}     & 37.1M & 14.1G  & 30.02  & 103.39 & 0.8750 \\
ConvNeXt~\cite{liu2022convnet}         & 37.3M & 14.1G  & 26.41  & 102.56 & 0.8908 \\
VAN~\cite{guo2023visual}               & 44.5M & 16.0G  & 31.39  & 116.28 & 0.8823 \\
HorNet~\cite{rao2022hornet}            & 45.7M & 16.3G  & 29.19  & 110.17 & 0.8798 \\
MogaNet~\cite{moga}                    & 46.8M & 16.5G  & 25.14  & 99.69  & 0.8963 \\
PFGNet(Ours)                    & 41.3M & 15.2G & 23.55 & 94.78 & 0.9024 \\
\bottomrule
\end{tabular}
\label{tab:performance_comparison_more}
\vspace{-2mm} 
\end{table}

\begin{figure}[htbp]
\centering
\includegraphics[width=\linewidth]{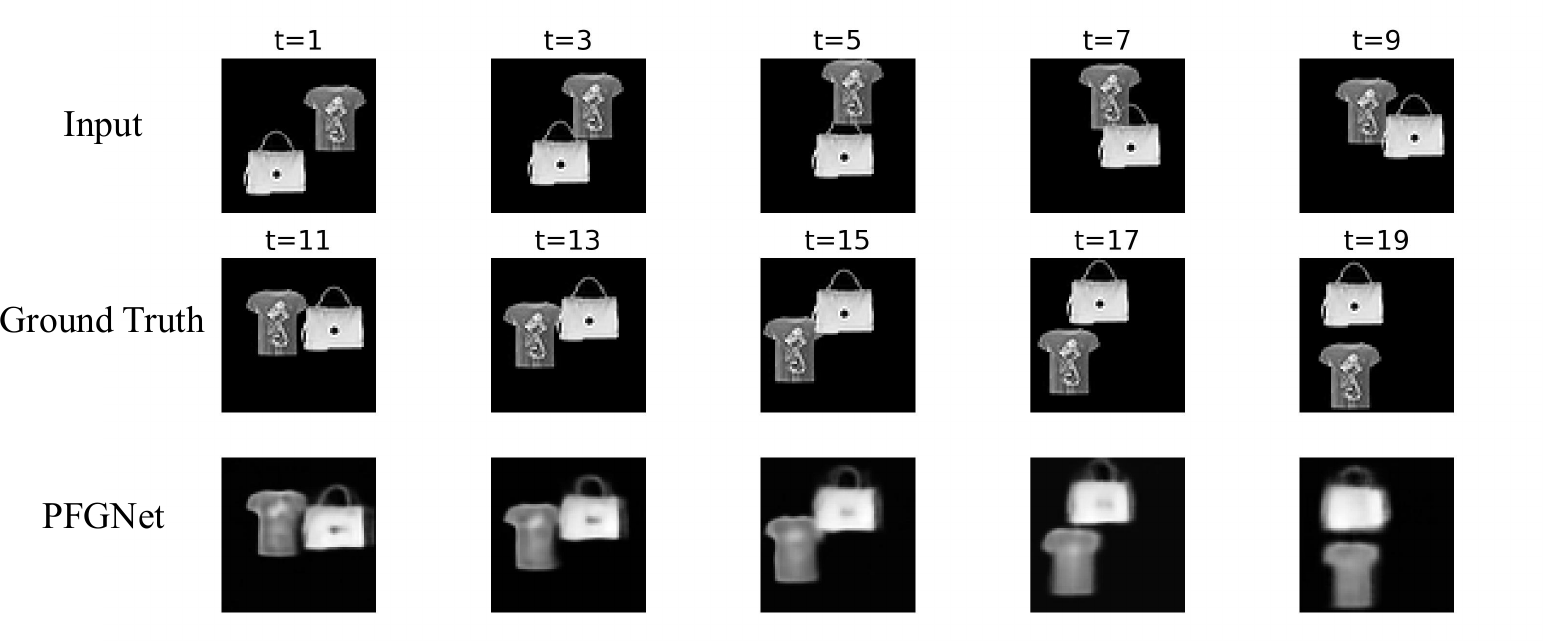}
\caption{
Qualitative results of PFGNet on Moving FMNIST. 
}
\label{fig:mfmnist_qual}
\vspace{-2mm} 
\end{figure}

\section{Additional Ablation Studies}

\begin{figure}[htbp]
  \centering
  \includegraphics[width=0.65\columnwidth]{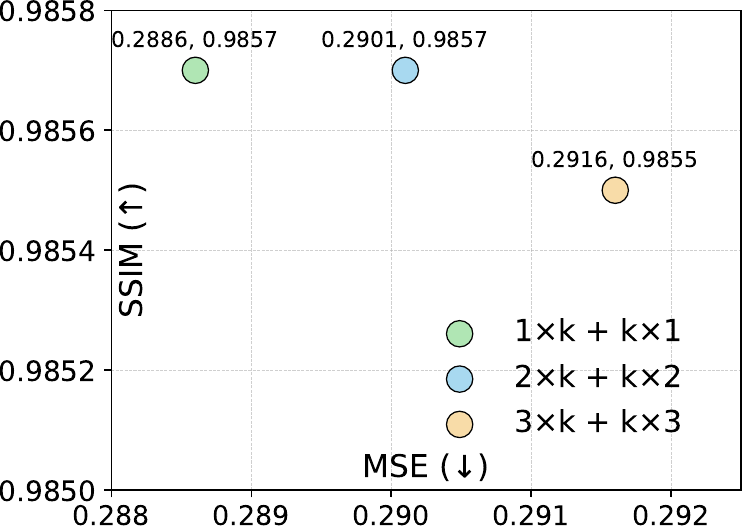}
  \caption{Ablation on asymmetric convolution ($n\times k + k\times n$).}
  \label{fig:ab3}
\end{figure}

As shown in Figure~\ref{fig:ab3}, we further analyze the effect of different $n$ settings in the asymmetric convolution ($n\times k + k\times n$) on the TaxiBJ dataset. The results show that varying $n$ brings only minor changes in performance; setting $n{=}1$ achieves nearly the same accuracy while reducing computational complexity. Therefore, we adopt $n{=}1$ as a compact and efficient default configuration in our experiments.

\begin{table}[tbp]
    \centering
    \small
    \caption{Ablation on the number of PFG blocks $N_t$ on Moving MNIST and TaxiBJ.}
    \label{tab:ablation_nt}
    {%
    \setlength{\tabcolsep}{3.15pt}
    \begin{tabular}{cc|ccccc}
        \hline
        Dataset & $N_t$ & Params & FLOPs & MSE $\downarrow$ & MAE $\downarrow$ & SSIM $\uparrow$ \\
        \hline
        \multirow{5}{*}{\shortstack{Moving MNIST\\\small(100 epochs)}}
            & 2  & 11.0M & 7.4G & 34.70  & 93.93 & 0.9201  \\
            & 4  & 21.1M & 10.0G & 29.74 & 83.38 & 0.9326  \\
            & 6  & 31.2M & 12.6G & 28.33  & 80.14  & 0.9363  \\
            & 8  & 41.3M & 15.2G & 27.61 & 78.54  & 0.9381  \\
            & 10 & 51.3M & 17.7G & \textbf{27.21}  & \textbf{77.57} & \textbf{0.9391}  \\
        \hline
        \multirow{5}{*}{TaxiBJ}
            & 2  & 0.53M  & 0.21G  & 0.3669  & 15.88  & 0.9826  \\
            & 4  & 0.97M  & 0.33G  & 0.3216  & 15.21  & 0.9842  \\
            & 6  & 1.41M  & 0.44G  & 0.3130  & 14.99  & 0.9851  \\
            & 8  & 1.86M  & 0.55G  & \textbf{0.2881}  & \textbf{14.75}  & \textbf{0.9857}  \\
            & 10 & 2.30M  & 0.67G  & 0.2919  & 14.78  & 0.9853  \\
        \hline
    \end{tabular}%
    }
\end{table}

\begin{table}[tbp]
    \centering
    \small
    \caption{Ablation on the number of PFG blocks $N_t$ on KTH (10$\rightarrow$20).}
    \label{tab:ablation_nt_kth}
    {%
    \setlength{\tabcolsep}{3.3pt}
    \begin{tabular}{cc|cccc}
        \hline
        Dataset & $N_t$ & Params & FLOPs & SSIM $\uparrow$ & PSNR $\uparrow$ \\
        \hline
        \multirow{5}{*}{KTH 10 $\rightarrow$ 20}
            & 2  & 10.8M & 57.1G & 0.908 & 33.88  \\
            & 4  & 20.8M & 98.5G & 0.910 & 34.01  \\
            & 6  & 30.9M & 0.14T & 0.911 & 34.10  \\
            & 8  & 41.0M & 0.18T & 0.909 & 34.10  \\
            & 10 & 51.1M & 0.22T & \textbf{0.912} & \textbf{34.24}  \\
        \hline
    \end{tabular}%
    }
\end{table}

In all main experiments, we set the number of PFG blocks $N_t$ to follow the same configuration as SimVP and its variants in OpenSTL to ensure a fair comparison; Tables~\ref{tab:ablation_nt} and \ref{tab:ablation_nt_kth} further investigate how varying $N_t$ from 2 to 10 affects model complexity and accuracy. As expected, Params and FLOPs grow linearly with $N_t$, since stacking more PFG blocks only introduces additional, structurally identical modules. On Moving MNIST, a larger $N_t$ consistently leads to lower MSE/MAE and higher SSIM, indicating that this dataset benefits from deeper temporal modeling and does not show overfitting within the tested range. On TaxiBJ, performance also improves as $N_t$ increases, but the best results are obtained at $N_t{=}8$, while $N_t{=}10$ brings slightly worse scores, suggesting a mild overfitting tendency and diminishing returns when the model becomes too deep. On KTH (10$\rightarrow$20), increasing $N_t$ from 2 to 10 generally leads to higher SSIM and PSNR, though the improvements become marginal once $N_t \ge 6$ and even show slight fluctuations, and the final training loss remains clearly below the validation loss, indicating that the motion structure is already well captured and extra depth mainly refines local details rather than improving generalization. In practice, choosing $N_t{=}4$ or $N_t{=}6$ recovers most of the gains while keeping the model compact, which aligns with the settings commonly used in OpenSTL. Overall, these results confirm that the OpenSTL default choice is well justified and provides a well-balanced trade-off between computational efficiency and predictive performance, and we therefore adopt this setting in our main experiments.

\begin{figure}[htbp]
  \centering
  \includegraphics[width=0.95\linewidth]{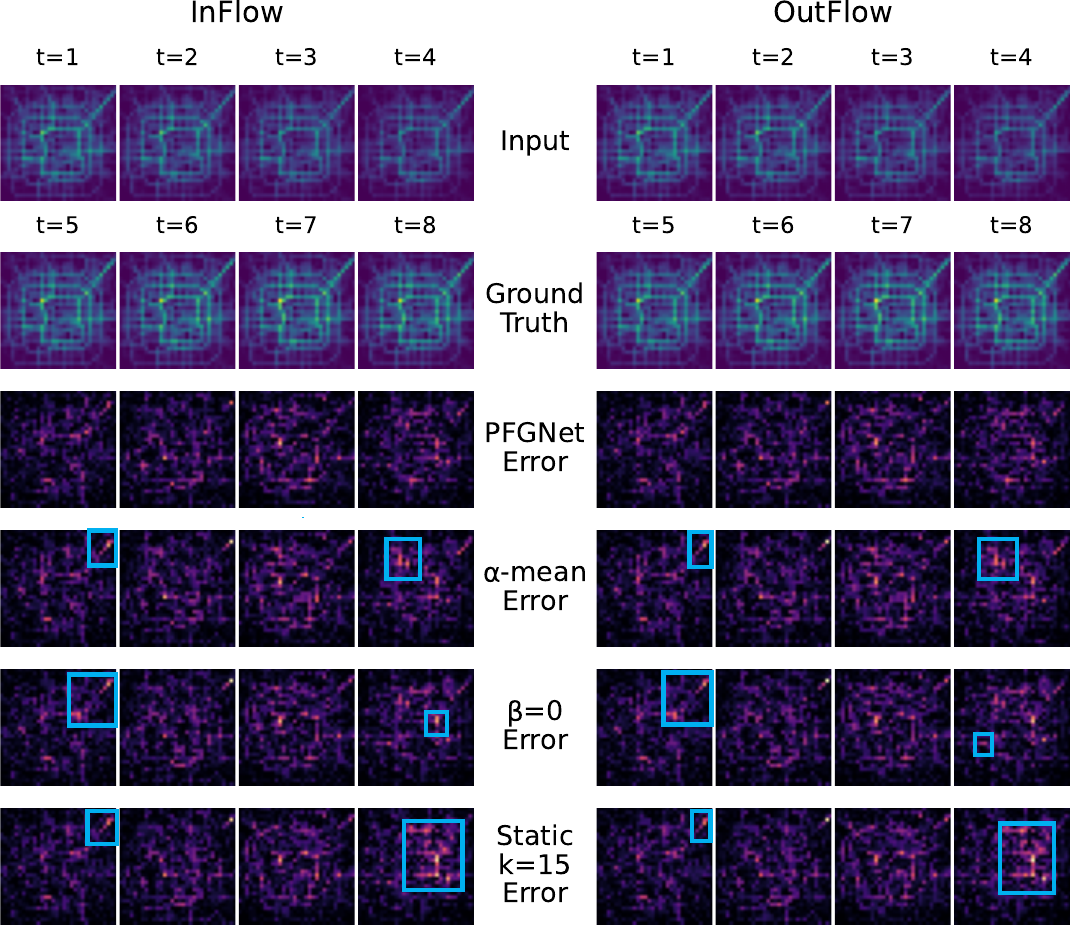}
  \caption{Qualitative visualizations on TaxiBJ.}
  \label{fig:taxibj_vis}
\end{figure}

Figure~\ref{fig:taxibj_vis} presents qualitative results on TaxiBJ. We compare PFGNet error maps ($|\hat{Y}-Y|$) against three ablations: uniform fusion ($\alpha$-mean), disabled center suppression ($\beta=0$), and static kernel ($k=15$). All maps share a unified scale. The ablations exhibit visibly larger and more diffuse errors (see highlights) compared to the sparse residuals of PFGNet. \textbf{This qualitative evidence corroborates the quantitative results in Sec.~4.3}, confirming that both adaptive gating and learnable center modulation are critical for refining structural details.

\begin{figure}[htbp]
  \centering
  \includegraphics[width=1.00\linewidth]{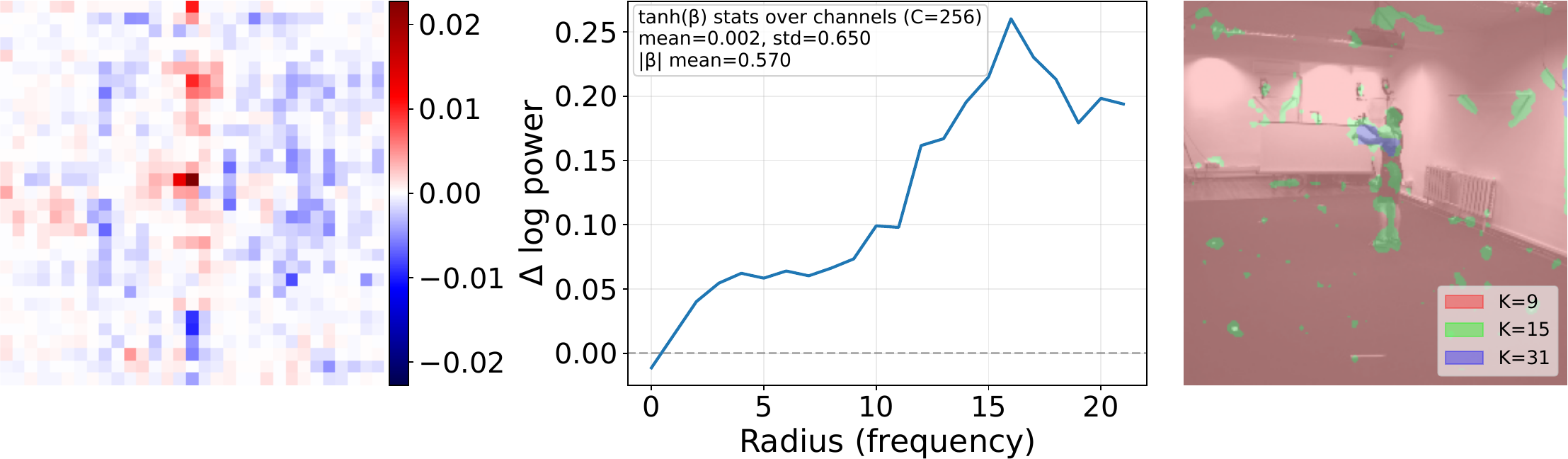}
  \caption{Mechanistic visualization of the learned PFGNet.}
  \label{fig:mechanistic_spectrum}
\end{figure}

\textbf{Layer-wise dissection on Human3.6M.}
As shown in Figure~\ref{fig:mechanistic_spectrum}, we select the information-dense Human3.6M benchmark and dissect one learned PFG module, reproducing the operator behavior from the spatial to the spectral domain.

\textbf{Spatial antagonistic structure (Left).}
The median effective kernel (\(K=31\) branch) exhibits a sign contrast between center and surround with an approximately ring-shaped tendency. This pattern is not manually imposed; it emerges from separable large-k peripheral aggregation and the learnable center--surround suppression term. The resulting shape is analogous to the classic Difference-of-Gaussians receptive-field model of retinal ganglion cells.

\textbf{Adaptive spectral re-weighting (Middle).}
We plot the radial log-power ratio for the \(K=31\) branch against \(\beta=0\) baseline ($\Delta\log P(r)=\log P_{\text{full}}-\log P_{\beta=0}$). Energy is weakened in the low-frequency region, while strengthened in the mid-to-high frequency region to enhance motion boundaries and structural changes. The curve flattens at the highest-frequency end, indicating that high-frequency noise is not over-amplified. With the statistics in the figure, \(\tanh(\beta)\) has near-zero mean and large standard deviation across channels, suggesting that the so-called suppression acts as channel-dependent bidirectional regulation: some channels suppress the center response, while others effectively compensate it. This flexibility aligns with antagonistic processing in biological vision.

\textbf{Spatially adaptive kernel selection (Right).}
We overlay the \(\arg\max\) map of the learned gating weights \(\alpha\), upsampled to the input resolution, on an input frame, visualizing the dominant kernel scale at each spatial location. The model prefers larger kernels around dynamic regions and motion boundaries, while assigning smaller kernels to smooth and static background, supporting input-adaptive receptive-field selection.


\end{document}